\definecolor{Mycolor}{HTML}{E9696B}
\title{Prior-Guided Diffusion Planning for \\ Offline Reinforcement Learning}
\author{
  Donghyeon Ki$^1$ \quad JunHyeok Oh$^1$ \quad Seong-Woong Shim$^1$ \quad Byung-Jun Lee$^{1,2}$ \\
  \\
  $^1$Korea University \quad $^2$Gauss Labs Inc.\\
  \texttt{\{peop1e1n, the2ndlaw, ssw030830, byungjunlee\}@korea.ac.kr} \\
}
\begin{document}

\maketitle

\begin{abstract}
Diffusion models have recently gained prominence in offline reinforcement learning due to their ability to effectively learn high-performing, generalizable policies from static datasets. Diffusion-based planners facilitate long-horizon decision-making by generating high-quality trajectories through iterative denoising, guided by return-maximizing objectives. However, existing guided sampling strategies such as Classifier Guidance, Classifier-Free Guidance, and Monte Carlo Sample Selection either produce suboptimal multi-modal actions, struggle with distributional drift, or incur prohibitive inference-time costs. To address these challenges, we propose \textbf{\textit{Prior Guidance}} (PG), a novel guided sampling framework that replaces the standard Gaussian prior of a behavior-cloned diffusion model with a learnable distribution, optimized via a behavior-regularized objective. PG directly generates high-value trajectories without costly reward optimization of the diffusion model itself, and eliminates the need to sample multiple candidates at inference for sample selection. We present an efficient training strategy that applies behavior regularization in latent space, and empirically demonstrate that PG outperforms state-of-the-art diffusion policies and planners across diverse long-horizon offline RL benchmarks. Our code is available at \url{https://github.com/ku-dmlab/PG}.
\end{abstract}

\section{Introduction}
\label{sec:introduction}
Offline reinforcement learning (RL) allows agents to learn effective policies from fixed, pre-collected datasets, eliminating additional environment interaction. This paradigm is especially beneficial in scenarios where exploration is prohibitively expensive, unsafe, or impractical~\citep{levine2020offline}. A core challenge in offline RL arises from distributional shifts, which can lead learned policies to favor out-of-distribution (OOD) actions due to overestimated value functions. 
To mitigate this issue, many existing methods incorporate behavior regularization to keep the learned policy close to the behavior policy that generated the data~\citep{levine2020offline, wu2019behavior, fujimoto2021minimalist, xu2023offline}.

Recently, diffusion-based planners have been proposed as a promising direction for offline RL. Leveraging their strengths in modeling long-range temporal dependencies and conditioning on auxiliary information~\citep{ho2020denoising, dhariwal2021diffusion, austin2021structured, li2022diffusion, ho2022video, khachatryan2023text2video}, diffusion models are well-suited for sequence modeling in complex decision-making tasks. By conditioning on future return surrogates, diffusion models can generate trajectories biased toward high-value behaviors, thereby demonstrating superior performance over prior approaches in long-horizon navigation and manipulation benchmarks across diverse offline RL settings~\citep{janner2022planning, ajay2022conditional, liang2023adaptdiffuser, li2023hierarchical, chen2024simple}.

Diffusion planners have employed various guided sampling strategies to steer trajectory generation toward high-return outcomes. Classifier Guidance (CG)~\citep{dhariwal2021diffusion} leverages a learned critic to directly guide the denoising process toward high-value trajectories. Classifier-Free Guidance (CFG)~\citep{ho2022classifier}, in contrast, achieves conditioning by interpolating between conditional and unconditional diffusion models, removing the need for an additional network. While conceptually appealing, these methods face practical limitations: CG can produce suboptimal actions in settings where the optimal policy is multi-modal~\citep{mao2024diffusion}, and CFG often fails to generalize when the conditioning signal diverges from those seen during training, resulting in unreliable plans.

To address these challenges, recent high-performing diffusion planners have adopted a simple yet effective alternative—Monte Carlo sample selection (MCSS)~\citep{hansen2023idql,lu2025makes}. MCSS generates $N$ candidate trajectories and selects the one with the highest estimated return using a separately learned critic. While empirically successful, MCSS faces three primary limitations: (i) it incurs significant computational overhead at evaluation time due to the need to sample and score multiple trajectories, (ii) it does not guarantee the sampling of high-quality trajectories in the dataset, potentially overlooking valuable samples with high estimated value but low sampling likelihood, and (iii) increasing $N$ to improve coverage can lead to selecting out-of-distribution trajectories with spuriously high critic estimates.

\begin{figure}
    \centering
    \includegraphics[width=1.0\linewidth]{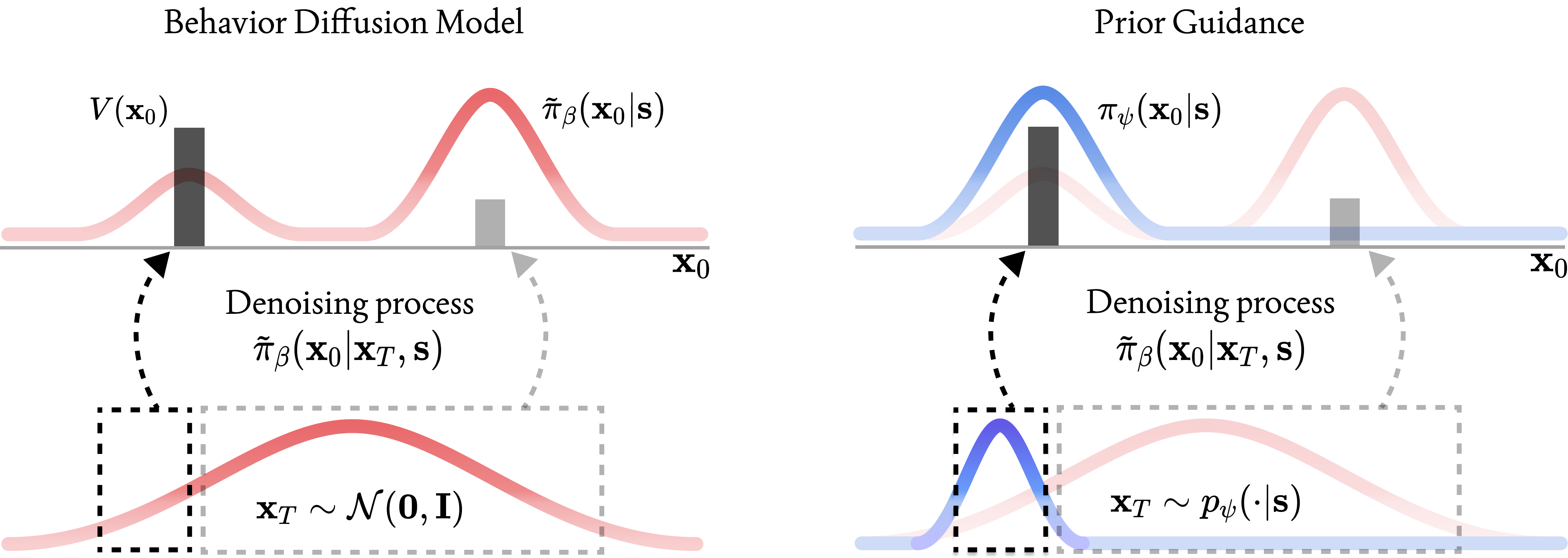}
    \caption{Comparison between the behavior-cloned diffusion model and Prior Guidance. \textbf{(Left)} The behavior diffusion model samples latent noise $\mathbf{x}_T$ from a standard Gaussian and generates trajectories $\mathbf{x}_0$ via a learned denoising process $\tilde{\pi}_{\beta}(\mathbf{x}_0|\mathbf{x}_T, \mathbf{s})$, approximating the dataset distribution.
    \textbf{(Right)} Prior Guidance instead samples $\mathbf{x}_T$ from a learned prior $p_\psi(\mathbf{x}_T | \mathbf{s})$ concentrated in high-value regions, leading to improved trajectory generation through the same denoising process.}
    \label{fig:mcss_pg}
\end{figure}

Rather than guiding a pretrained diffusion model at inference time, an alternative approach is to directly train the diffusion model to generate high-value behaviors by maximizing a learned critic under behavior regularization~\citep{wang2022diffusion, ding2023consistency, ada2024diffusion, zhang2024entropy}. However, this training objective typically incurs substantial computational overhead, as it requires backpropagation through the entire denoising process. Consequently, these efforts have primarily focused on developing diffusion policies rather than diffusion planners. Moreover, since diffusion models do not offer tractable density evaluation, most existing methods have relied on simple forms of behavior regularization, such as incorporating auxiliary behavior cloning losses.

To this end, we propose \textbf{\textit{Prior Guidance}} (PG), a novel guided sampling method that trains the prior distribution that decodes into high-value trajectories. After the training of a diffusion model through behavior cloning, we replace the standard Gaussian prior with a learnable parametrized prior distribution. This learnable prior is optimized using a behavior-regularized objective, encouraging the sampling of high-value trajectories through the diffusion model while remaining close to the behavior distribution (See Figures~\ref{fig:mcss_pg} and~\ref{fig:mcss_pg_maze}). 

PG effectively addresses the limitations of previous methods. In contrast to MCSS, PG generates a single trajectory during inference, significantly reducing computational overhead. Compared to end-to-end training of the diffusion model with a behavior-regularized objective, PG is more efficient, as it avoids backpropagation through the full denoising process and requires optimizing only a small set of parameters. Furthermore, by modeling the target prior as a simple Gaussian distribution, PG enables more expressive and analytically tractable behavior regularization, achieving a more favorable trade-off between maximizing potential performance and mitigating the risk of out-of-distribution action selection. Empirically, PG outperforms existing diffusion-based methods and achieves state-of-the-art performance on long-horizon tasks in the D4RL offline RL benchmark suite~\citep{fu2020d4rl}.

\begin{figure}
    \centering
    \includegraphics[width=1.0\linewidth]{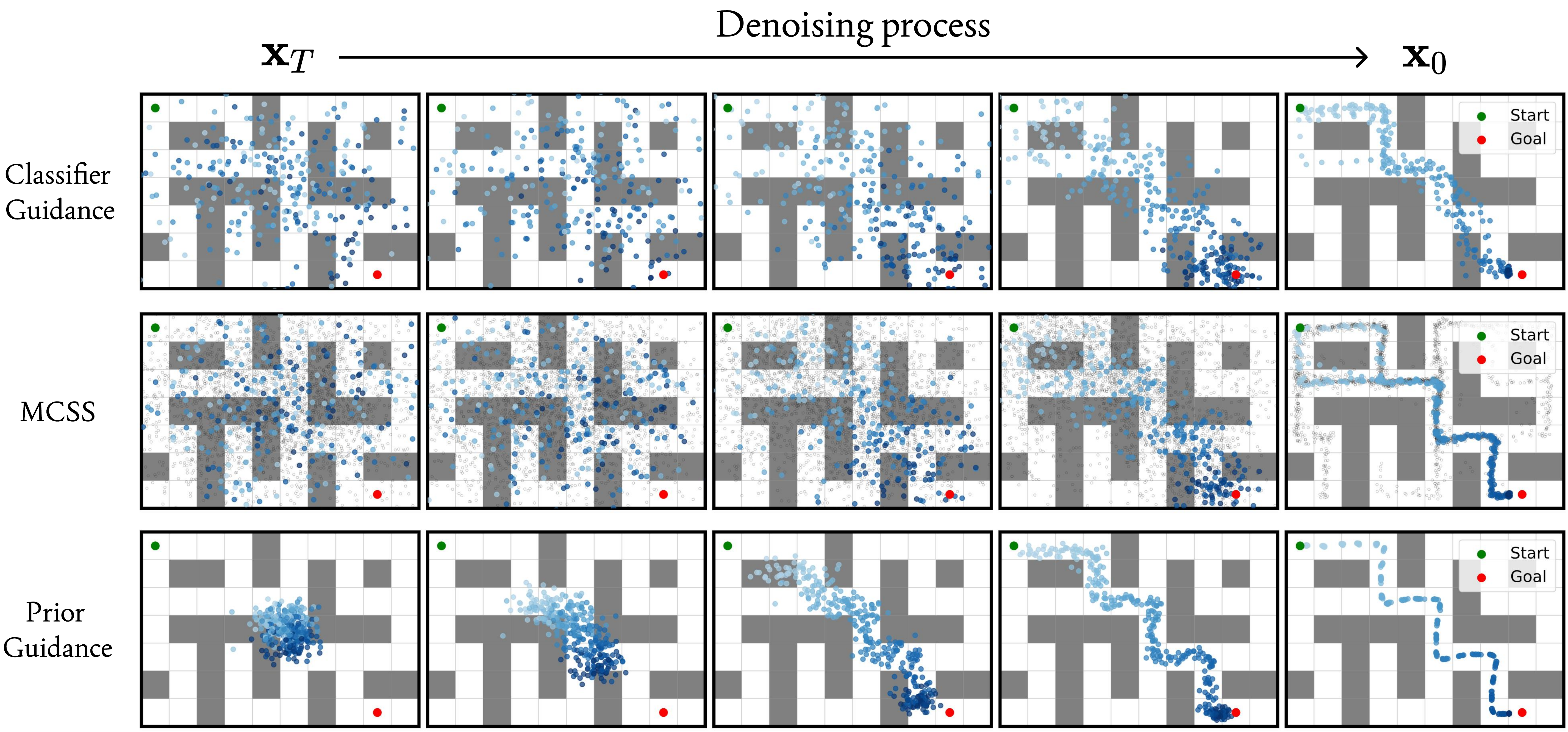}
    \caption{Visualization of trajectory generation using three different guided sampling methods on \texttt{maze2d-large-v1}. Each method produces 20 planned trajectories. For MCSS, each trajectory is selected from a set of 10 candidates ($N=10$); the unselected candidates are visualized in gray.}
    \label{fig:mcss_pg_maze}
\end{figure}

\section{Preliminaries}
\paragraph{Diffusion Probabilistic Models} Diffusion models~\citep{ho2020denoising} are a class of generative models based on a Markovian forward–reverse process. The forward diffusion process gradually corrupts a data sample $ \mathbf{x}_0 $ by adding Gaussian noise over $ T $ steps, defined as $ q(\mathbf{x}_{1:T} | \mathbf{x}_0) = \prod_{t=1}^T q(\mathbf{x}_t | \mathbf{x}_{t-1}) $, where $ q(\mathbf{x}_t | \mathbf{x}_{t-1}) = \mathcal{N}(\mathbf{x}_t; \sqrt{1 - \beta_t} \mathbf{x}_{t-1}, \beta_t \mathbf{I}) $ and $ \beta_t \in (0, 1) $ is a predefined variance schedule. The reverse process is a parameterized Markov chain starting from a standard Gaussian prior $ p(\mathbf{x}_T) = \mathcal{N}(\mathbf{0}, \mathbf{I}) $, defined as $ p_\theta(\mathbf{x}_{0:T}) = p(\mathbf{x}_T) \prod_{t=1}^T p_\theta(\mathbf{x}_{t-1} | \mathbf{x}_t) $, where $ p_\theta(\mathbf{x}_{t-1} | \mathbf{x}_t) = \mathcal{N}(\mathbf{x}_{t-1}; \mu_\theta(\mathbf{x}_t, t), \Sigma_\theta(\mathbf{x}_t, t)) $ and both $ \mu_\theta $ and $ \Sigma_\theta $ are predicted by a neural network.

The model is trained by minimizing a variational upper bound on the negative log-likelihood. In practice, a simplified objective is widely used, where the model learns to predict the added noise:
\begin{align}
\label{eq:diffusion_loss}
\mathcal{L}(\theta) = \mathbb{E}_{\mathbf{x}_0, \epsilon, t} \left[ \left\| \epsilon - \epsilon_\theta \left( \sqrt{\bar{\alpha}_t} \mathbf{x}_0 + \sqrt{1 - \bar{\alpha}_t} \epsilon, t \right) \right\|^2 \right],    
\end{align}
where $ \epsilon \sim \mathcal{N}(\mathbf{0}, \mathbf{I}) $ and $ \bar{\alpha}_t = \prod_{i=1}^t (1 - \beta_i) $. This training objective corresponds to denoising score matching across multiple noise levels. Once trained, the model can generate new samples by iteratively following the reverse process defined above. Alternatively, \cite{song2020denoising} proposed another generative process that shares the same marginal distribution, defined by
\begin{align*}
    \mathbf{x}_{t-1} = \sqrt{\bar{\alpha}_{t-1}} \left(\frac{\mathbf{x}_t - \sqrt{1 - \bar{\alpha}_t}\epsilon_\theta(\mathbf{x}_t, t)}{\sqrt{\bar{\alpha}_t}}\right) + \sqrt{1 - \bar{\alpha}_{t-1} - \sigma_t^2} \cdot \epsilon_\theta (\mathbf{x}_t, t) + \sigma_t \boldsymbol{\epsilon},
\end{align*}
where $\epsilon \sim \mathcal{N}(\mathbf{0},\mathbf{I})$ and $\sigma_t$ is a tunable parameter that controls the stochasticity of the generation. When $\sigma_t = 0$, the generative process becomes entirely deterministic, resulting in the Denoising Diffusion Implicit Model (DDIM) sampling scheme.

\paragraph{Offline Reinforcement Learning}  We assume the reinforcement learning problem under a Markov Decision Process (MDP). The MDP can be represented as a tuple $\mathcal{M} = \{S,~ A,~ P,~ R,~ \gamma,~ p_0 \}$, where $S$ is the state space, $A$ is the action space, $P(\mathbf{s}'|\mathbf{s},\mathbf{a}) : S \times A \rightarrow \Delta(S)$ is a transition dynamics, $R(\mathbf{s},\mathbf{a}):S\times A \rightarrow \mathbb{R}$ is a reward function, $p_0 \in \Delta(S)$ is an initial state distribution and $\gamma \in [0,1)$ is a discount factor. Offline RL aims to train a parameterized policy $\pi_\psi(\mathbf{a}|\mathbf{s})$, which maximizes the expected discounted return from a dataset $\mathcal{D}$ without environment interactions.

A common strategy in offline RL is to optimize policies via a behavior-regularized objective: $\max_\psi \mathbb{E}_{\mathbf{s} \sim \mathcal{D}, \mathbf{a} \sim \pi_\psi(\cdot|\mathbf{s})} \left[Q(\mathbf{s}, \mathbf{a}) - \alpha f\left(\frac{\pi_\psi(\mathbf{a}|\mathbf{s})}{\pi_\beta(\mathbf{a}|\mathbf{s})}\right)\right]$, where $Q$ denotes the action-value function of policy $\pi_\psi$, $\pi_\beta$ is the behavior policy, and $f(\cdot)$ is a regularization function (e.g., for $f$-divergence)~\citep{wu2019behavior, xu2023offline, peng2019advantage, nair2020awac, chen2023score}. However, when employing a diffusion policy or planner to model $\pi_\psi$, directly evaluating the density $\pi_\psi(\mathbf{a}|\mathbf{s})$ becomes intractable, making it challenging to apply sophisticated regularization functions. As a result, prior works have instead resorted to using a diffusion behavior cloning loss~\eqref{eq:diffusion_loss} as a surrogate for behavior regularization~\citep{wang2022diffusion, ding2023consistency, ada2024diffusion, zhang2024entropy}.

\paragraph{Diffusion Planner}
Diffusion-based planners~\citep{janner2022planning, ajay2022conditional, liang2023adaptdiffuser, li2023hierarchical, chen2024simple, lu2025makes} generate a trajectory of length $H$ starting from the current time step $\tau$. While various design choices are available, in this work, we largely follow the recommendations of \cite{lu2025makes}, which are based on comprehensive experimental analysis. Specifically, a diffusion planner is trained to generate $H$-length sequences of current and future states $\mathbf{x}^{(\tau)}=[\mathbf{s}^{(\tau)},\mathbf{s}^{(\tau+m)},\dots,\mathbf{s}^{(\tau+(H-1)m)}]$ with the planning stride of $m$, where $\tau$ denotes the environment timestep. To bias the generation toward high-return behaviors, a trajectory critic $V(\mathbf{x}^{(\tau)})=\mathbb{E}\left[\sum_{h=0}^{\infty}\gamma^{h}r^{(\tau+h)} | \mathbf{x}^{(\tau)}\right]$ is estimated and utilized during planning. This critic can be either the behavior value function or the target value function. After producing a high-value state sequence $\mathbf{x}^{(\tau)}$, a separately trained inverse dynamics model is employed to recover the corresponding action $\mathbf{a}^{(\tau)}$ from the predicted sequence. For simplicity, we overload the notation and denote the diffusion planner-based policy as $\pi(\mathbf{x}|\mathbf{s})$. 

A \textbf{detailed discussion of related work} is provided in the Appendix~\ref{appendix:related_works}.

\section{Diffusion Planning with Prior Guidance}

\begin{figure}[t!]
    \centering
    \includegraphics[width=1.0\linewidth]{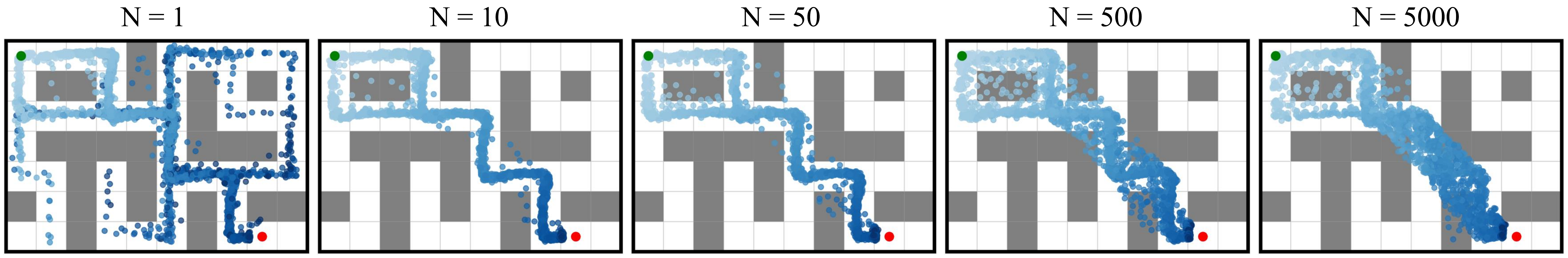}
    \caption{Predicted state sequences in \texttt{maze2d-large-v1} based on MCSS varying the number of candidate trajectories $N$. When $N$ is small ($N=1$), MCSS often fails to sample trajectories with high value estimates, as the likelihood of capturing an optimal trajectory within a limited set is low. Conversely, when $N$ is excessively large ($N=500$ or $5000$), the method tends to select OOD trajectories with inflated value estimates.}
    \label{fig:MCSS}
\end{figure}

\paragraph{Limitations of Monte Carlo Sample Selection (MCSS)} 
Monte Carlo Sample Selection (MCSS), which selects the best sample from $N$ candidates based on a separately trained value function, has been shown to achieve superior performance and has been widely adopted in recent diffusion-based offline RL~\citep{mao2024diffusion, hansen2023idql, lu2025makes}. Nonetheless, as discussed above, MCSS also exhibits significant limitations: (i) it incurs substantial computational overhead during inference, as it requires sampling multiple candidates from diffusion models, which are already expensive to sample from, (ii) it does not guarantee coverage of high-value samples within the behavioral distribution, and (iii) while a sufficiently large $N$ is necessary for effective guidance, excessively large $N$ increases the risk of selecting OOD samples with overestimated value estimates, leading to performance degradation.\footnote{A notable variant is SfBC~\citep{chen2022offline}, which performs resampling by assigning weights proportional to $\exp(\alpha Q(\mathbf{s}, \mathbf{a}))$ over a set of $N$ candidate actions generated by a diffusion policy. This weighting scheme implicitly induces KL-behavior regularization and mitigates limitation (iii). However, it introduces an additional hyperparameter $\alpha$, increasing the complexity of hyperparameter tuning. Moreover, due to other unresolved limitations, the method continues to incur significant inference-time computational overhead.}
Figure~\ref{fig:MCSS} illustrates these limitations.

\subsection{Prior Guidance}

To mitigate these shortcomings, we aim to adopt a behavior regularization framework in providing guidance for diffusion planners. Concretely, the objective is formalized as:
\begin{align}
    \label{eq:behavior_regularized_planner}
    \max_\psi \mathbb{E}_{\mathbf{s} \sim \mathcal{D}, \mathbf{x}_0 \sim \pi_\psi(\cdot|\mathbf{s})} \left[V(\mathbf{x}_0) - \alpha f \left(\frac{\pi_\psi(\mathbf{x}_0|\mathbf{s})}{\pi_\beta(\mathbf{x}_0|\mathbf{s})}\right)\right],
\end{align}
where $\mathbf{x}_0$ denotes a fully denoised trajectory generated by the diffusion model. This objective promotes the generation of high-value trajectories while constraining them to stay close to the dataset. However, directly optimizing this objective with diffusion models poses significant challenges: (i) it requires expensive backpropagation through the iterative denoising process, which becomes particularly costly for high-dimensional diffusion planners; and (ii) evaluating $\pi_\psi(\mathbf{x}_0|\mathbf{s})$ is non-trivial, as diffusion models do not admit tractable density estimation.

We now introduce \textbf{\textit{Prior Guidance}}, a method for guiding diffusion models with a learnable prior distribution, which is free from the abovementioned challenges. Let $\tilde{\pi}_{\beta}$ denote a diffusion planner trained via behavior cloning. Given a state $\mathbf{s}$, the model generates a trajectory $\mathbf{x}_0$ by first sampling an initial noise $\mathbf{x}_T\sim p(\mathbf{x}_T) = \mathcal{N}(\mathbf{0}, \mathbf{I})$, followed by a denoising process. Assuming a DDIM sampling, the conditional density $\tilde{\pi}_{\beta}(\mathbf{x}_0|\mathbf{s})$ can be expressed as: 
\begin{align*}
    \tilde{\pi}_{\beta} (\mathbf{x}_0|\mathbf{s}) &= \mathbb{E}_{p(\mathbf{x}_T)}\left[\delta(\mathbf{x}_0 = g_{\mathbf{s}}(\mathbf{x}_T))\right],
\end{align*}
where $\delta(\cdot)$ denotes the Dirac delta function, and $g_{\mathbf{s}}$ is the deterministic denoising operator of the behavior diffusion model $\tilde{\pi}_{\beta}$ for state $\mathbf{s}$.

\cite{song2020denoising} has demonstrated that interpolations in the latent space $\mathbf{x}_T$ yield smooth and semantically meaningful trajectories in the sample space, indicating that $\mathbf{x}_T$ captures the semantic structure of $\mathbf{x}_0$. This observation motivates shifting where the learning happens from the diffusion model to the prior distribution. Accordingly, we propose to learn a parameterized Gaussian prior $p_\psi(\mathbf{x}_T|\mathbf{s})$, while keeping the denoising operator $g_{\mathbf{s}}$ fixed. The trajectory distribution after planning $\pi_\psi$ can then be reformulated as:
\begin{align*}
    \pi_\psi (\mathbf{x}_0|\mathbf{s}) = \mathbb{E}_{p_\psi(\mathbf{x}_T|\mathbf{s})}\left[\delta(\mathbf{x}_0 = g_{\mathbf{s}}(\mathbf{x}_T))\right].
\end{align*}
When a sufficiently large number of discretization steps are employed, the denoising operator $g_{\mathbf{s}}$ can be viewed as an approximate, bijective ODE solver. This bijection allows us to re-express previously intractable behavior-regularized objective of the diffusion model in terms of a tractable objective over prior distributions:
\begin{restatable}[]{proposition}{pg}
Assume that DDIM sampling employs a sufficiently large number of discretization steps ensuring that the mapping $\mathbf{x_0}=g_{\mathbf{s}}(\mathbf{x}_T)$ is bijective. If the target trajectory density $\pi_\psi$ is parametrized by placing the prior $p_\psi(\mathbf{x}_T|\mathbf{s})$ while keeping the denoising process $g_{\mathbf{s}}$ fixed, the behavior-regularized objective~\eqref{eq:behavior_regularized_planner} is equivalent to:
    \label{prop:prior}
    \begin{align}
        \label{eq:final} \max_\psi \mathbb{E}_{\mathbf{s} \sim \mathcal{D}, \mathbf{x}_T \sim p_\psi(\cdot|\mathbf{s})} \left[V\left(g_{\mathbf{s}}\left(\mathbf{x}_T\right)\right) - \alpha f \left(\frac{p_\psi(\mathbf{x}_T|\mathbf{s})}{p(\mathbf{x}_T)}\right)\right].
    \end{align}
\end{restatable}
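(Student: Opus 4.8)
The plan is to establish the equivalence by a change-of-variables argument: under the stated hypotheses $g_{\mathbf{s}}$ is a diffeomorphism, and because Prior Guidance modifies only the prior while leaving $g_{\mathbf{s}}$ untouched, the Jacobian determinant of this diffeomorphism is common to both $\pi_\psi$ and the behavior-cloned model $\tilde{\pi}_{\beta}$ (which is the $\pi_\beta$ appearing in~\eqref{eq:behavior_regularized_planner}) and therefore cancels in the density ratio.

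First I would make the pushforward structure explicit. The identity $\pi_\psi(\mathbf{x}_0|\mathbf{s}) = \mathbb{E}_{p_\psi(\mathbf{x}_T|\mathbf{s})}[\delta(\mathbf{x}_0 = g_{\mathbf{s}}(\mathbf{x}_T))]$ says precisely that $\pi_\psi(\cdot|\mathbf{s})$ is the pushforward of $p_\psi(\cdot|\mathbf{s})$ under $g_{\mathbf{s}}$; since the number of DDIM steps is large enough that $g_{\mathbf{s}}$ is an (approximately) bijective probability-flow ODE solver, it is $C^1$ with nonvanishing Jacobian, and the change-of-variables formula gives
\[
\pi_\psi(\mathbf{x}_0|\mathbf{s}) = p_\psi\bigl(g_{\mathbf{s}}^{-1}(\mathbf{x}_0)\,\big|\,\mathbf{s}\bigr)\,\bigl|\det \nabla g_{\mathbf{s}}^{-1}(\mathbf{x}_0)\bigr|.
\]
Applying the same step to $\tilde{\pi}_{\beta}$, whose prior is $p(\mathbf{x}_T)$ and whose denoising operator is the \emph{same} $g_{\mathbf{s}}$, yields $\tilde{\pi}_{\beta}(\mathbf{x}_0|\mathbf{s}) = p(g_{\mathbf{s}}^{-1}(\mathbf{x}_0))\,|\det \nabla g_{\mathbf{s}}^{-1}(\mathbf{x}_0)|$. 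Dividing, the Jacobian factors cancel and I obtain $\pi_\psi(\mathbf{x}_0|\mathbf{s})/\tilde{\pi}_{\beta}(\mathbf{x}_0|\mathbf{s}) = p_\psi(g_{\mathbf{s}}^{-1}(\mathbf{x}_0)|\mathbf{s})/p(g_{\mathbf{s}}^{-1}(\mathbf{x}_0))$.

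Next I would transport the outer expectation through $g_{\mathbf{s}}$. By the law of the unconscious statistician, for any integrable $\phi$ one has $\mathbb{E}_{\mathbf{x}_0\sim\pi_\psi(\cdot|\mathbf{s})}[\phi(\mathbf{x}_0)] = \mathbb{E}_{\mathbf{x}_T\sim p_\psi(\cdot|\mathbf{s})}[\phi(g_{\mathbf{s}}(\mathbf{x}_T))]$. Taking $\phi(\mathbf{x}_0) = V(\mathbf{x}_0) - \alpha f\bigl(\pi_\psi(\mathbf{x}_0|\mathbf{s})/\tilde{\pi}_{\beta}(\mathbf{x}_0|\mathbf{s})\bigr)$, setting $\mathbf{x}_0 = g_{\mathbf{s}}(\mathbf{x}_T)$ so that $g_{\mathbf{s}}^{-1}(\mathbf{x}_0)=\mathbf{x}_T$, and substituting the ratio identity from the previous paragraph collapses the integrand to $V(g_{\mathbf{s}}(\mathbf{x}_T)) - \alpha f\bigl(p_\psi(\mathbf{x}_T|\mathbf{s})/p(\mathbf{x}_T)\bigr)$. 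Taking the expectation over $\mathbf{s}\sim\mathcal{D}$ and then the supremum over $\psi$ converts~\eqref{eq:behavior_regularized_planner} into~\eqref{eq:final}, as claimed; note also that the parametrization class is unchanged — every $p_\psi(\cdot|\mathbf{s})$ induces a valid $\pi_\psi(\cdot|\mathbf{s})$ and vice versa via $g_{\mathbf{s}}$, so the two optimization problems have the same feasible set.

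The main obstacle is not the algebra but pinning down the regularity under which it is valid: one must justify that a DDIM / probability-flow ODE integrator with sufficiently fine discretization is not merely injective but a $C^1$ diffeomorphism with everywhere-nonzero Jacobian determinant, so that $g_{\mathbf{s}}^{-1}$ exists and is differentiable and the change-of-variables formula legitimately applies; and the Dirac-delta bookkeeping should be made rigorous either distributionally or as a limit of Gaussian-smoothed decoders. A secondary point worth flagging explicitly in the write-up is that the whole argument depends on $\pi_\psi$ and $\tilde{\pi}_{\beta}$ sharing the identical decoder $g_{\mathbf{s}}$ — this is what makes the Jacobians cancel — which is exactly the hypothesis that Prior Guidance learns only the prior.
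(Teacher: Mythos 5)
Your proposal is correct and follows essentially the same route as the paper's own proof: both express $\pi_\psi(\mathbf{x}_0|\mathbf{s})$ and $\tilde{\pi}_\beta(\mathbf{x}_0|\mathbf{s})$ via the change-of-variables formula for the shared bijective decoder $g_{\mathbf{s}}$, cancel the common Jacobian determinant in the density ratio, and transport the expectation from $\mathbf{x}_0\sim\pi_\psi$ to $\mathbf{x}_T\sim p_\psi$. Your additional remarks on the regularity of the DDIM map and the unchanged feasible set are reasonable refinements but do not change the argument.
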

The proof is in the Appendix~\ref{appendix:proof_prior}. Note that the behavior regularization term $f(\cdot)$ is now evaluated between two Gaussian distributions ($p_\psi$ and $p$), rather than between intractable diffusion model densities, which enables closed-form computation under several popular formulations of $f$.

\paragraph{Avoiding Backpropagation Through the Denoising Process} The reformulated objective~\eqref{eq:final} still depends on $g_{\mathbf{s}}$, requiring costly backpropagation through the denoising process. To avoid this, we introduce a value function $\bar{V}$ that operates directly in the latent prior space $\mathbf{x}_T$ rather than on the denoised output $\mathbf{x}_0 = g_{\mathbf{s}}(\mathbf{x}_T)$. Due to the bijectivity of $g_{\mathbf{s}}$ and the smoothness of the latent space induced by the diffusion model, the latent value function $\bar{V}$ is well-defined and straightforward to learn. It is trained by minimizing the following objective:
\begin{align}
\label{eq:train_value}
\min_\phi \mathbb{E}_{\mathbf{x}_T \sim p_\psi(\cdot|\mathbf{s})} \left[\left(\bar{V}_\phi(\mathbf{x}_T) - V\left(g_{\mathbf{s}}(\mathbf{x}_T)\right)\right)^2\right],
\end{align}
where $V(g_{\mathbf{s}}(\mathbf{x}_T))$ serves as the target signal. Once trained, the latent space value function $\bar{V}_\phi$ is used to replace the original critic in the behavior-regularized objective~\eqref{eq:final}:
\begin{align}
\label{eq:train_prior}
\max_\psi \mathbb{E}_{\mathbf{s} \sim \mathcal{D}, \mathbf{x}_T \sim p_\psi(\cdot|\mathbf{s})} \left[\bar{V}_\phi(\mathbf{x}_T) - \alpha f\left(\frac{p_\psi(\mathbf{x}_T|\mathbf{s})}{p(\mathbf{x}_T)}\right)\right].
\end{align}
This training scheme completely eliminates the need for gradient flow through the denoising operator $g_\mathbf{s}$. Gradients are propagated only through the prior distribution $p_\psi$ and the value function $\bar{V}_\phi$, while $g_\mathbf{s}$ remains fixed. Consequently, PG circumvents the major computational bottleneck associated with directly optimizing diffusion models for high-value behavior generation, thereby enabling its application to diffusion planners, whereas previous approaches were constrained to diffusion policies due to their computational overhead. In practice, we alternate between minimizing the regression loss in Eq.~\eqref{eq:train_value} and optimizing the prior via Eq.~\eqref{eq:train_prior}. In Section~\ref{sec:analysis}, we qualitatively analyze the latent value function $\bar{V}$ and show that it accurately captures the intended value structure.

\begin{figure}
    \centering
    \includegraphics[width=0.703\linewidth]{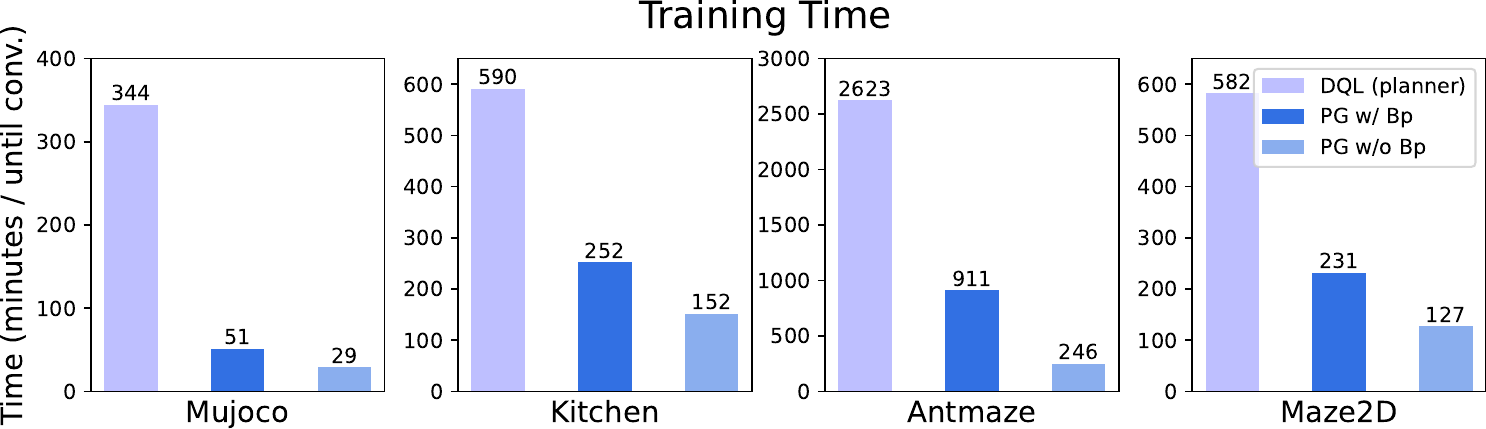}
    \hfill
    \includegraphics[width=0.267\linewidth]{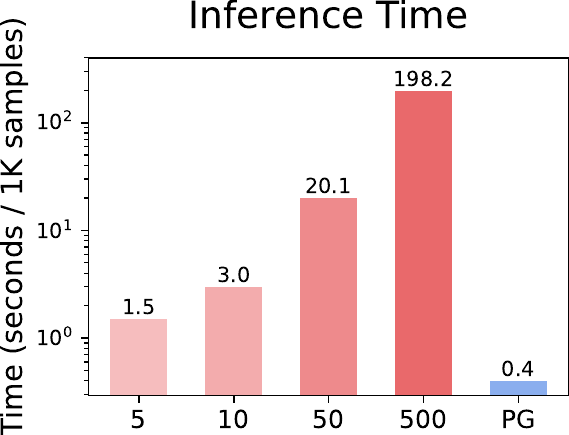}
    \caption{Training and inference time comparison. \textbf{(Left)} Training time until convergence for DQL~(planner), PG with and without backpropagation through the denoising process across different environments. DQL (planner) denotes a baseline where the diffusion model is directly trained via Eq.~\eqref{eq:behavior_regularized_planner}; due to the intractability of computing the model's density, Eq.~\eqref{eq:diffusion_loss} is used as a surrogate. \textbf{(Right)} Inference time for MCSS and PG. The inference time is measured on the \texttt{antmaze-medium-play-v2}, with MCSS evaluated under different numbers of sampled trajectories.
    \label{fig:time_comparison}}
    \label{fig:enter-label}
\end{figure}

\begin{figure}[t!] 
\centering 
\includegraphics[width=1\linewidth]{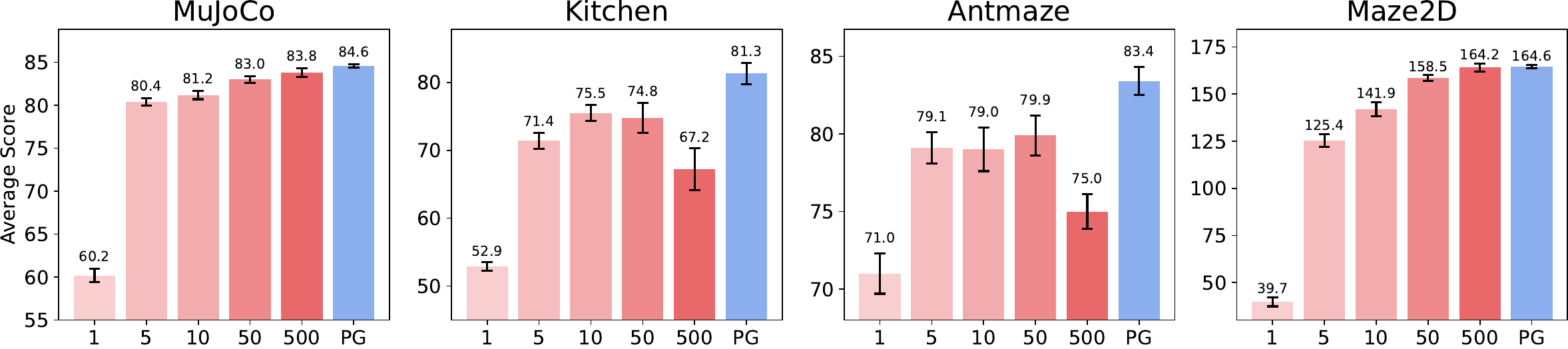} 
\caption{Average normalized score comparison between MCSS with varying numbers of samples $N$ and PG across four D4RL tasks. PG consistently matches or outperforms MCSS while requiring significantly fewer samples. Full results are provided in the Appendix~\ref{appendix:dv}.} 
\label{fig:mcss_n} 
\end{figure}

\subsection{Does PG Address the Limitations of MCSS?}

We empirically validate whether PG indeed resolves the limitations of MCSS, as claimed above. To ensure a fair comparison, all other experimental settings were held constant when comparing PG and MCSS.

\textbf{(1) Reduced computational cost during inference.} MCSS requires sampling multiple candidate trajectories at each environment step and selecting the one with the highest estimated value using a critic. Since sampling from a diffusion model is inherently computationally expensive, generating multiple candidates introduces substantial overhead, particularly in high-dimensional or long-horizon tasks. In contrast, PG generates a single trajectory starting from a learned prior, thereby avoiding this computational burden. Although PG introduces the additional cost of training the prior, it circumvents the need for costly backpropagation through the full denoising process by training a latent space value function, enabling efficient training. 

As shown in Figure~\ref{fig:time_comparison}, PG w/o Bp---our proposed method that removes backpropagation through the denoising process---significantly reduces training time compared to PG w/ Bp. Moreover, PG achieves substantially lower inference time than MCSS, especially when the number of sampled trajectories $N$ is large, highlighting its superior computational efficiency. We also report the training time of Diffusion Q-learning when implemented as a planner-based method, denoted as DQL (planner). As expected, DQL (planner) is significantly more time-consuming to train than PG, taking up to ten times longer in the AntMaze domain. More experimental details for DQL (planner) are provided in the Appendix~\ref{appendix:time_comparison}.

\textbf{(2) Tractable and stable behavior regularization.} MCSS lacks an explicit mechanism to guide sampling toward trajectories aligned with the behavioral distribution, often resulting in OOD trajectory selections with spuriously high value estimates. PG addresses this limitation by performing regularization directly in the prior space, where the divergence between the learned prior $p_\psi$ and the standard Gaussian prior $p$ admits a closed-form expression. This enables stable and analytically tractable behavior regularization, which was previously infeasible in diffusion models due to the intractability of their output densities.

Empirical results presented in Figure~\ref{fig:mcss_n} corroborate this advantage: while MCSS exhibits a trade-off between trajectory optimality and OOD avoidance, PG with appropriate behavior regularization consistently outperforms MCSS with the best performing $N$ across diverse domains.

\begin{table*}[t!]
    \centering
    \caption{Normalized scores on the D4RL benchmark. For each dataset, the best-performing method is highlighted in {\color{Mycolor}red}. Prior Guidance (PG) consistently achieves strong performance, particularly on long-horizon tasks, and outperforms DV* in nearly all environments. We compute the mean and standard error over 5 random seeds. Due to space constraints, the standard error for all algorithms are provided in the Appendix~\ref{appendix:d4rl}.}
    \resizebox{\textwidth}{!}{
    \begin{tabular}{ccccccccccccccc}
        \toprule
        \multirow{2}{*}{\textbf{Dataset}}& &\multicolumn{2}{c}{\textbf{Gaussian policies}} &\multicolumn{5}{c}{\textbf{Diffusion policies}} &\multicolumn{6}{c}{\textbf{Diffusion planners}} \\
        \cmidrule(r){3-4} \cmidrule(lr){5-9} \cmidrule(lr){10-15}
         & \textbf{BC}  & \textbf{CQL} & \textbf{IQL} & \textbf{SfBC}& \textbf{DQL} & \textbf{IDQL} &  \textbf{QGPO} & \textbf{SRPO} & \textbf{Diffuser} & \textbf{AD} & \textbf{DD} & \textbf{HD} &\textbf{DV*} &\textbf{PG} \\
        \midrule
        Walker2d-M            & 6.6  &79.2 &78.3 &77.9 &\textbf{\color{Mycolor}87.0} &82.5 &86.0 &84.4 &79.6 &84.4 &82.5 &84.0 &79.5 &82.3\\
        Walker2d-M-R     & 11.8  &26.7 &73.9 &65.1 &\textbf{\color{Mycolor}95.5} &85.1 &84.4 &84.6 &70.6 &84.7 &75.0 &84.1 &83.5 &83.7\\
        Walker2d-M-E     & 6.4  &111.0 &109.6 &109.8 &110.1 &112.7 &110.7 &\textbf{\color{Mycolor}114.0} &106.9 &108.2 &108.8 &107.1 &109.0 &109.4\\
        Hopper-M              & 29.0  &58.0 &66.3 &57.1 &90.5 &65.4 &98.0 &95.5 &74.3 &96.6 &79.3 &\textbf{\color{Mycolor}99.3} &84.1 &97.5\\ 
        Hopper-M-R       & 11.3  &48.6 &94.7 &86.2 &\textbf{\color{Mycolor}101.3}  &92.1  &96.9 &101.2 &93.6 &92.2 &100.0 &94.7 &91.3 &91.3\\
        Hopper-M-E       & 111.9 & 98.7 &91.5 &108.6 &111.1  &108.6 &108.0 &100.1 &103.3 &111.6 & 111.8 &\textbf{\color{Mycolor}115.3} &109.9 &110.4\\
        HalfCheetah-M         & 36.1   & 44.4 & 47.4 & 45.9 & 51.1 & 51.0 & 54.1 &\textbf{\color{Mycolor}60.4} & 42.8 & 44.2 & 49.1 & 46.7 &50.9 &45.6\\
        HalfCheetah-M-R  & 38.4   & 46.2 & 44.2 & 37.1 & 47.8 & 45.9 & 47.6 &\textbf{\color{Mycolor}51.4} & 37.7 & 38.3 & 39.3 & 38.1 &46.4 &46.4\\
        HalfCheetah-M-E  & 35.8   & 62.4 & 86.7 & 92.6 & \textbf{\color{Mycolor}96.8} & 95.9 & 93.5 &92.2 & 88.9 & 89.6 & 90.6 & 92.5 &92.3 &95.2\\
        \midrule
        \textbf{Average} & 31.9 & 63.9 & 77.0 & 75.6 & \textbf{\color{Mycolor}87.9} & 82.1  & 86.6 &87.1 & 77.5 & 83.3 & 81.8 & 84.6 &83.0 &84.6\\
        \midrule
        Kitchen-M    & 47.5 & 51.0 &51.0 &45.4 & 62.6 & 66.5 & -- & -- & 52.5 & 51.8 & 65.0 & 71.7  &73.3 &\textbf{\color{Mycolor}74.6}\\        
        Kitchen-P  & 33.8 & 49.8 &46.3 &47.9 & 60.5 & 66.7  & -- & -- & 55.7 & 55.5 & 57.0  & 73.3  &76.2 &\textbf{\color{Mycolor}88.0}\\
        \midrule
        \textbf{Average} & 40.7 & 50.4 & 48.7 & 46.7 & 61.6 & 66.6  & -- & -- & 54.1 & 53.7 & 61.0 & 72.5 &74.8 &\textbf{\color{Mycolor}81.3}\\
        \midrule
        Antmaze-M-P      &0.0 &14.9 &71.2 &81.3 &76.6 &84.5 &83.6 &80.7 &6.7 &12.0 &8.0 &--  &80.8 &\textbf{\color{Mycolor}87.8}\\
        Antmaze-M-D      &0.0 &15.8 &70.0 &82.0 &78.6 &84.8 &83.8 &75.0 &2.0 &6.0 &4.0 &\textbf{\color{Mycolor}88.7}  &82.0 &87.3\\
        Antmaze-L-P      &0.0 &53.7 &39.6 &59.3 &46.4 &63.5 &66.6 &53.6 &17.3 &5.3 &0.0 &-- &80.8 &\textbf{\color{Mycolor}82.4}\\
        Antmaze-L-D      &0.0 &61.2 &47.5 &45.5 &56.6 &67.9 &64.8 &53.6 &27.3 &8.7 &0.0 &\textbf{\color{Mycolor}83.6}  &76.0 &76.0\\
        \midrule
        \textbf{Average} &0.0 &36.4 &57.1 &67.0 &64.6 &75.2 &74.7 &65.7 &13.3 &8.0 &3.0 & --  &79.9 &\textbf{\color{Mycolor}83.4}\\
        \midrule
        Maze2D-U      & 3.8 &5.7 &47.4 &73.9 &-- &57.9 &-- & -- &113.9 &135.1 &-- &128.4  &133.8 &\textbf{\color{Mycolor}139.2}\\
        Maze2D-M     & 30.3 &5.0 &34.9 &73.8 &-- &89.5 &-- & -- &121.5 &129.9 &-- &135.6  &144.1 &\textbf{\color{Mycolor}159.5}\\
        Maze2D-L      & 5 &12.5 &58.6 &74.4 &-- &90.1 &-- & -- &123 &167.9 &-- &155.8     &\textbf{\color{Mycolor}197.6} &195.2\\
        \midrule
        \textbf{Average} &13.0 &7.7 &47.0 &74.0 &-- &79.2 &-- & -- &119.5 &144.3 &-- &139.9    &158.5 &\textbf{\color{Mycolor}164.6}\\
        \bottomrule
    \end{tabular}}
    \label{tab:d4rl}
\end{table*}

\section{Experiments}
\label{sec:experiments}
\begin{wrapfigure}{r}{0.5\textwidth}
\vspace{-0.8cm}
\begin{minipage}{0.48\textwidth}
\begin{algorithm}[H]
\caption{\textbf{Prior Guidance}}
\label{alg:pg}
\small
\textbf{Input:} Dataset $D$, Hyperparameter $\alpha$ \\
\textbf{Require:} Planner $g_{\mathbf{s}}$, Inverse Dynamics $\epsilon_\omega$, Critic $V$ \\
\textbf{Initialize:} Prior $p_\psi$, Critic $\bar{V}_\phi$
\begin{algorithmic}[1]
\State \textbf{\color{Mycolor}// Training}
\For{each iteration}
    \State Sample $\mathbf{s} \sim D$ and $\mathbf{x}_T \sim p_\psi(\cdot \mid \mathbf{s})$
    \State Update critic $\bar{V}_\phi$ using Eq.~\eqref{eq:train_value}
    \State Update prior $p_\psi$ using Eq.~\eqref{eq:train_prior}
\EndFor
\State \textbf{\color{Mycolor}// Execution}
\State $\mathbf{s}^{(0)} = \text{env.reset()}$
\For{environment step $\tau$ = 0, 1, \dots}
    \State Sample $\mathbf{x}_T \sim p_\psi(\cdot \mid \mathbf{s}^{(\tau)})$
    \State Generate $\mathbf{x}_0$ using $g_{\mathbf{s}}$
    \State Predict action $\mathbf{a}^{(\tau)}$ using $\epsilon_\omega(\mathbf{x}_0)$
    \State $\mathbf{s}^{(\tau+1)} = \text{env.step}(\mathbf{a}^{(\tau)})$
\EndFor
\end{algorithmic}
\end{algorithm}
\end{minipage}
\vspace{-0.5cm}
\end{wrapfigure}
\paragraph{Practical Implementation} The target prior $p_\psi$ in PG is parameterized using a GRU~\citep{chung2014empirical} to capture the temporal dependencies in $\mathbf{x}$. Conditioned on the current state, the prior network outputs a sequence of mean and log standard deviation vectors that define independent Gaussian distributions over the latent noise dimensions. The current state is first provided as input to the GRU, and its output (prior to sampling) is fed back as input for subsequent steps. Detailed experimental settings are provided in the Appendix~\ref{appendix:experimental_settings}.

Apart from the learnable prior, PG is built on top of Diffusion Veteran (DV)~\citep{lu2025makes}, following their design choices. Due to the long training time of the original implementation, we re-implemented DV and refer to this version as DV*. PG inherits from DV* the planner $g_{\mathbf{s}}$, the inverse dynamics model $\epsilon_\omega$, and the critic $V$ as described in Algorithm~\ref{alg:pg}; following DV, PG uses the behavior value function for its critic $V$. The pseudocode for DV* is provided in the Appendix~\ref{appendix:dv}, along with a detailed comparison to the performance of the original DV implementation.

\subsection{D4RL Benchmark}
We conducted experiments on the D4RL offline RL benchmark~\citep{fu2020d4rl}, which span a wide range of domains and dataset settings. We evaluate Prior Guidance (PG) across four representative tasks: MuJoCo, Kitchen, Antmaze, and Maze2D. In MuJoCo tasks, the dataset types are denoted as M (medium), R (replay), and E (expert); in Kitchen tasks, M (mixed) and P (partial); in Antmaze tasks, M (medium), L (large), P (play), and D (diverse); in Maze2D tasks, U (umaze), M (medium), and L (large). We benchmark against BC, CQL~\citep{kumar2020conservative}, IQL~\citep{kostrikov2021offline}, SfBC~\citep{chen2022offline}, DQL~\citep{wang2022diffusion}, IDQL~\citep{hansen2023idql}, QGPO~\citep{lu2023contrastive}, SRPO~\citep{chen2023score}, Diffuser~\citep{janner2022planning}, AD~\citep{liang2023adaptdiffuser}, DD~\citep{ajay2022conditional}, HD~\citep{chen2024simple}, and DV~\citep{lu2025makes}. The compared methods are categorized into Gaussian policies (CQL, IQL), diffusion policies (SfBC, DQL, IDQL, QGPO, SRPO), and diffusion planners (Diffuser, AD, DD, HD, DV). The normalized scores are summarized in Table~\ref{tab:d4rl}. 

As shown in Table~\ref{tab:d4rl}, PG achieves state-of-the-art performance on tasks that require long-horizon decision making. By addressing the key limitations of MCSS, it significantly improves upon DV*. In MuJoCo tasks where complex decision making is not required, diffusion planners generally underperform compared to diffusion policies. Although PG also exhibits relatively lower performance in these tasks, it achieves the highest scores among diffusion planners. In long-horizon tasks such as Kitchen, AntMaze, and Maze2D, PG substantially outperforms all baselines across most tasks.

\subsection{Analysis}
\label{sec:analysis}
\begin{figure}[t!]
    \centering
    \includegraphics[width=1.0\linewidth]{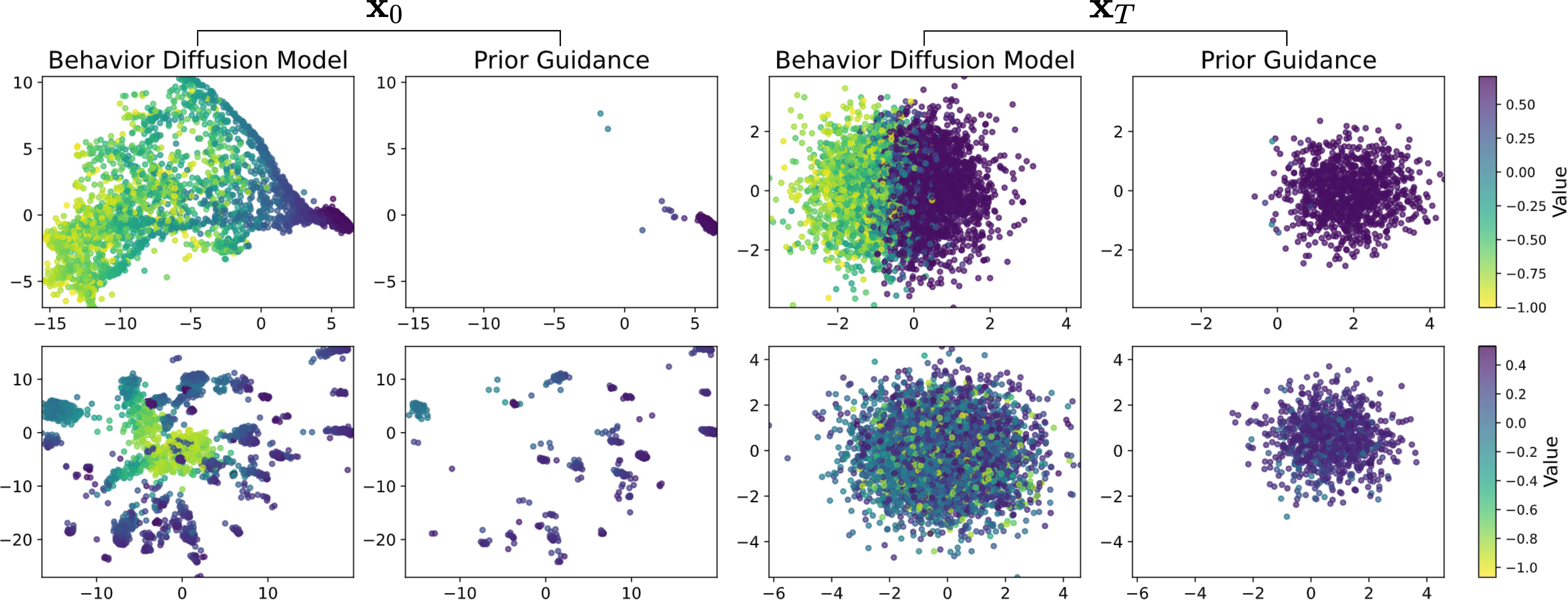}
    \caption{Visualizations of trajectories $(\mathbf{x}_0)$ and priors $(\mathbf{x}_T)$ from the behavior diffusion model and Prior Guidance in \texttt{maze2d-large-v1} \textbf{(top)} and \texttt{antmaze-medium-diverse-v2} \textbf{(bottom)}. We applied PCA~\citep{jolliffe2016principal} to project the high-dimensional trajectories into two dimensions, with the color of each sample representing its estimated value. We plotted 5,000 samples from the standard normal prior of the behavior diffusion model and 1,000 samples from the target prior of PG.}
    \label{fig:qualitative_experiments}
    % \vspace{-0.3cm}
\end{figure}
\begin{figure}[t]
    \centering
    % Subfigure: Table
    \begin{subfigure}[]{0.58\textwidth}
        \centering
        \includegraphics[width=0.484\linewidth]{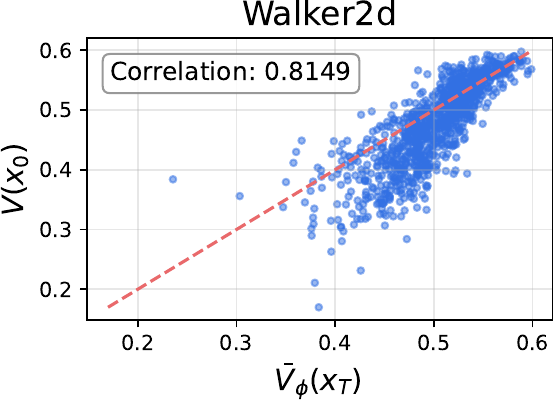}
        \includegraphics[width=0.496\linewidth]{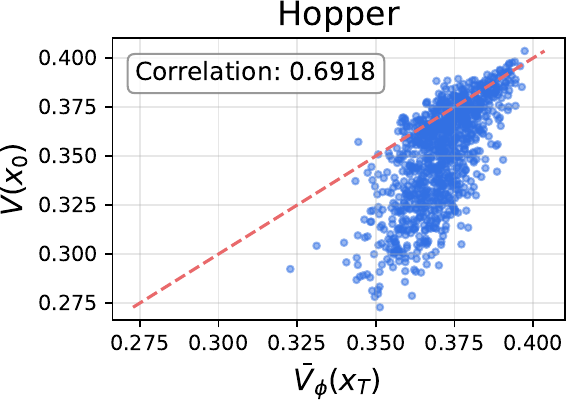}
        \vspace{-0.2cm}
        \caption{Correlation between $\bar{V}_\phi(\mathbf{x}_T)$ and $V(\mathbf{x}_0)$ in \texttt{walker2d} and \texttt{hopper} environments.}
        \label{fig:correlation_pg}
        % \vspace{0.3cm}
    \end{subfigure}
    \hfill
    \begin{subfigure}[]{0.38\textwidth}
        \centering
        \resizebox{1.0\textwidth}{!}{
        \begin{tabular}{ccc}
        \toprule
        \textbf{Dataset} & \textbf{PG w/ Bp} & \textbf{PG w/o Bp} \\
        \midrule
        MuJoCo  & \textbf{85.8}$\pm$0.3  & 84.6$\pm$0.2  \\
        Kitchen & 81.0$\pm$2.4  & \textbf{81.3}$\pm$1.6  \\
        Antmaze & \textbf{84.0}$\pm$1.2  & 83.4$\pm$0.9  \\
        Maze2D  & 164.2$\pm$1.5 & \textbf{164.6}$\pm$0.9 \\
        \midrule
        \textbf{Total} &\textbf{415.0} &413.9 \\
        \bottomrule
        \end{tabular}}
        \caption{Average scores with and without backpropagation through the denoising process.}
        \label{fig:table_pg}
    \end{subfigure}
    % \vspace{-0.2cm}
    \caption{Experiments to evaluate whether the latent value function $\bar{V}$ is accurately trained.}
    \label{fig:combined_pg}
    % \vspace{-0.4cm}
\end{figure}

\paragraph{Visualization on Latent Trajectories} We visualize the latent noise and corresponding trajectories from the behavior diffusion model and PG to verify whether the learned prior $p_\psi$ in PG accurately captures the noise $\mathbf{x}_T$ that is denoised into high-value trajectories $\mathbf{x}_0$. As shown in Figure~\ref{fig:qualitative_experiments}, the prior of the behavior diffusion model spans a wide range of values, whereas the learned prior of PG predominantly samples noise that leads to high-value trajectories. This demonstrates that the latent space of the learned behavior diffusion model is structured sufficiently well to cluster high-value trajectories under a simple Gaussian prior. Although the target prior in PG remains close to the standard normal distribution due to behavior regularization, it still captures mostly high-value trajectories.

\paragraph{Consequence of Using Latent‐Space Value Function}
To avoid backpropagation through the denoising process, PG trains a latent-space value function $\bar{V}$ directly on the prior noise $\mathbf{x}_T$~\eqref{eq:train_value}. Figure~\ref{fig:combined_pg} presents our empirical evaluation of this design choice.
In particular, Figure~\ref{fig:correlation_pg} depicts the correlation between learned latent values $\bar{V}_\phi(\mathbf{x}_T)$ and ground-truth values $V(\mathbf{x}_0)$ for the \texttt{walker2d-medium-v2} and \texttt{hopper-medium-v2} tasks. While a strong positive correlation is observed, a perfect learning of the mapping would yield a correlation of exactly $1$. The observed deviation reflects approximation errors, which could potentially affect performance when relying on the latent value function. However, our experiments show that these approximation errors do not lead to actual performance degradation. As shown in Figure~\ref{fig:table_pg}, PG achieves nearly identical normalized scores on the D4RL benchmark with and without backpropagation through the denoising process, confirming that training $\bar{V}$ in latent space is sufficient. Full results corresponding to Figure~\ref{fig:table_pg} are provided in the Appendix~\ref{appendix:extensive_results}.

\paragraph{More Expressive Target Prior}
\begin{wraptable}{r}{0.5\textwidth}
\vspace{-0.45cm}
    \centering
    \caption{Average normalized scores of uni-modal and multi-modal Gaussian prior on MuJoCo and Antmaze tasks.}
    \vspace{-0.15cm}
    \begin{tabular}{ccc}
        \toprule
        \textbf{Dataset} &\textbf{uni-modal} &\textbf{multi-modal}\\
        \midrule
        MuJoCo  &\textbf{84.6}$\pm$0.2 &82.4$\pm$0.4 \\
        Antmaze &83.4$\pm$0.9          &\textbf{85.8}$\pm$0.6 \\
        \bottomrule
    \end{tabular}
    \label{tab:mixture}
    \vspace{-0.3cm}
\end{wraptable}
To investigate the flexibility and representational capacity of PG, we extend our approach to use a multi-modal Gaussian mixture prior in place of the standard uni-modal Gaussian prior. This modification allows us to evaluate whether a more expressive prior distribution can yield performance gains. As shown in Table~\ref{tab:mixture}, the results are domain-dependent: in MuJoCo tasks, the uni-modal Gaussian prior slightly outperforms the multi-modal variant, whereas in AntMaze tasks, the multi-modal prior provides marginal improvements. These findings suggest that the benefit of a more expressive prior is more pronounced in complex, long-horizon environments such as AntMaze, while in simpler control tasks like MuJoCo, the latent space found by a diffusion model is sufficiently well structured to capture high-value trajectories only with a uni-modal gaussian distribution. Full results corresponding to Table~\ref{tab:mixture} are provided in the Appendix~\ref{appendix:extensive_results}.

\subsection{Ablation Studies}

\paragraph{Regularization Function $f$} By deriving a closed‐form solution for behavior regularization, PG accommodates a variety of regularization functions $f$. To assess the impact of different choices of $f$, we performed an ablation study on the D4RL benchmark. We compared $f(x)=-\frac{\log x}{x}$ (KL-divergence), $f(x)=\log x$ (reverse KL-divergence), and $f(x)=\frac{(x-1)^2}{x}$ (Pearson $\chi^2$-divergence). In practice, we use KL-divergence in PG. Ablation results are provided in the Appendix~\ref{appendix:ablation_stuides}.

\paragraph{Behavior Regularization Coefficient $\alpha$} We present ablation studies evaluating the performance of PG across different values of the behavior regularization coefficient $\alpha$. In practice, we sweep $\alpha \in \{50.0, 10.0, 1.0, 0.1, 0.01, 0.001\}$ in PG. Ablation results are provided in the Appendix~\ref{appendix:ablation_stuides}.

\section{Conclusion}
\label{sec:conclusion}
We introduced \textbf{\textit{Prior Guidance}}, a guided sampling method for diffusion planner-based offline RL that replaces the standard Gaussian prior with a learnable, behavior-regularized distribution. Unlike previous guided sampling methods, PG avoids inference-time overhead, reduces the risk of out-of-distribution actions, and enables stable behavior regularization in closed form. Empirically, it achieves strong performance across diverse tasks, especially in long-horizon domains.

\paragraph{Limitations} Despite its effectiveness, PG has several limitations. First, it introduces additional complexity through the training of both a latent value function and a learnable prior. Second, the architectural choices for the prior network have not been thoroughly investigated and warrant deeper exploration. Third, our evaluation is limited to offline settings and does not consider high-dimensional observations like images. Finally, the bijective mapping between latent noise and trajectory assumed by DDIM does not hold in practice due to limited discretization steps; this could be addressed by using flow-matching~\citep{lipman2022flow, park2025flow}, which we leave for future work.

\section{Acknowledgements}
This work was supported by the IITP(Institute of Information \& Coummunications Technology Planning \& Evaluation)-ITRC(Information Technology Research Center)(IITP-2025-RS-2024-00436857) grant funded by the Korea government(Ministry of Science and ICT) and by the IITP under the Artificial Intelligence Star Fellowship support program to nurture the best talents (IITP-2025-RS-2025-02304828) grant funded by the Korea government(MSIT).

\bibliographystyle{unsrt}
\bibliography{neurips_2025}

%%%%%%%%%%%%%%%%%%%%%%%%%%%%%%%%%%%%%%%%%%%%%%
\section*{NeurIPS Paper Checklist}

\begin{enumerate}

\item {\bf Claims}
    \item[] Question: Do the main claims made in the abstract and introduction accurately reflect the paper's contributions and scope?
    \item[] Answer: \answerYes{} % Replace by \answerYes{}, \answerNo{}, or \answerNA{}.
    \item[] Justification: Section~\ref{sec:introduction}
    \item[] Guidelines:
    \begin{itemize}
        \item The answer NA means that the abstract and introduction do not include the claims made in the paper.
        \item The abstract and/or introduction should clearly state the claims made, including the contributions made in the paper and important assumptions and limitations. A No or NA answer to this question will not be perceived well by the reviewers. 
        \item The claims made should match theoretical and experimental results, and reflect how much the results can be expected to generalize to other settings. 
        \item It is fine to include aspirational goals as motivation as long as it is clear that these goals are not attained by the paper. 
    \end{itemize}

\item {\bf Limitations}
    \item[] Question: Does the paper discuss the limitations of the work performed by the authors?
    \item[] Answer: \answerYes{} % Replace by \answerYes{}, \answerNo{}, or \answerNA{}.
    \item[] Justification: Section~\ref{sec:conclusion}
    \item[] Guidelines:
    \begin{itemize}
        \item The answer NA means that the paper has no limitation while the answer No means that the paper has limitations, but those are not discussed in the paper. 
        \item The authors are encouraged to create a separate "Limitations" section in their paper.
        \item The paper should point out any strong assumptions and how robust the results are to violations of these assumptions (e.g., independence assumptions, noiseless settings, model well-specification, asymptotic approximations only holding locally). The authors should reflect on how these assumptions might be violated in practice and what the implications would be.
        \item The authors should reflect on the scope of the claims made, e.g., if the approach was only tested on a few datasets or with a few runs. In general, empirical results often depend on implicit assumptions, which should be articulated.
        \item The authors should reflect on the factors that influence the performance of the approach. For example, a facial recognition algorithm may perform poorly when image resolution is low or images are taken in low lighting. Or a speech-to-text system might not be used reliably to provide closed captions for online lectures because it fails to handle technical jargon.
        \item The authors should discuss the computational efficiency of the proposed algorithms and how they scale with dataset size.
        \item If applicable, the authors should discuss possible limitations of their approach to address problems of privacy and fairness.
        \item While the authors might fear that complete honesty about limitations might be used by reviewers as grounds for rejection, a worse outcome might be that reviewers discover limitations that aren't acknowledged in the paper. The authors should use their best judgment and recognize that individual actions in favor of transparency play an important role in developing norms that preserve the integrity of the community. Reviewers will be specifically instructed to not penalize honesty concerning limitations.
    \end{itemize}

\item {\bf Theory assumptions and proofs}
    \item[] Question: For each theoretical result, does the paper provide the full set of assumptions and a complete (and correct) proof?
    \item[] Answer: \answerNA{} % Replace by \answerYes{}, \answerNo{}, or \answerNA{}.
    \item[] Justification: This paper does not include theoretical results.
    \item[] Guidelines:
    \begin{itemize}
        \item The answer NA means that the paper does not include theoretical results. 
        \item All the theorems, formulas, and proofs in the paper should be numbered and cross-referenced.
        \item All assumptions should be clearly stated or referenced in the statement of any theorems.
        \item The proofs can either appear in the main paper or the supplemental material, but if they appear in the supplemental material, the authors are encouraged to provide a short proof sketch to provide intuition. 
        \item Inversely, any informal proof provided in the core of the paper should be complemented by formal proofs provided in the Appendix or supplemental material.
        \item Theorems and Lemmas that the proof relies upon should be properly referenced. 
    \end{itemize}

    \item {\bf Experimental result reproducibility}
    \item[] Question: Does the paper fully disclose all the information needed to reproduce the main experimental results of the paper to the extent that it affects the main claims and/or conclusions of the paper (regardless of whether the code and data are provided or not)?
    \item[] Answer: \answerYes{} % Replace by \answerYes{}, \answerNo{}, or \answerNA{}.
    \item[] Justification: Section~\ref{sec:experiments}, Appendix~\ref{appendix:experimental_details}, Appendix~\ref{appendix:dv}, Appendix~\ref{appendix:hyperparameters}, Appendix~\ref{appendix:extensive_results}, Appendix~\ref{appendix:ablation_stuides}
    \item[] Guidelines:
    \begin{itemize}
        \item The answer NA means that the paper does not include experiments.
        \item If the paper includes experiments, a No answer to this question will not be perceived well by the reviewers: Making the paper reproducible is important, regardless of whether the code and data are provided or not.
        \item If the contribution is a dataset and/or model, the authors should describe the steps taken to make their results reproducible or verifiable. 
        \item Depending on the contribution, reproducibility can be accomplished in various ways. For example, if the contribution is a novel architecture, describing the architecture fully might suffice, or if the contribution is a specific model and empirical evaluation, it may be necessary to either make it possible for others to replicate the model with the same dataset, or provide access to the model. In general. releasing code and data is often one good way to accomplish this, but reproducibility can also be provided via detailed instructions for how to replicate the results, access to a hosted model (e.g., in the case of a large language model), releasing of a model checkpoint, or other means that are appropriate to the research performed.
        \item While NeurIPS does not require releasing code, the conference does require all submissions to provide some reasonable avenue for reproducibility, which may depend on the nature of the contribution. For example
        \begin{enumerate}
            \item If the contribution is primarily a new algorithm, the paper should make it clear how to reproduce that algorithm.
            \item If the contribution is primarily a new model architecture, the paper should describe the architecture clearly and fully.
            \item If the contribution is a new model (e.g., a large language model), then there should either be a way to access this model for reproducing the results or a way to reproduce the model (e.g., with an open-source dataset or instructions for how to construct the dataset).
            \item We recognize that reproducibility may be tricky in some cases, in which case authors are welcome to describe the particular way they provide for reproducibility. In the case of closed-source models, it may be that access to the model is limited in some way (e.g., to registered users), but it should be possible for other researchers to have some path to reproducing or verifying the results.
        \end{enumerate}
    \end{itemize}

\item {\bf Open access to data and code}
    \item[] Question: Does the paper provide open access to the data and code, with sufficient instructions to faithfully reproduce the main experimental results, as described in supplemental material?
    \item[] Answer: \answerYes{} % Replace by \answerYes{}, \answerNo{}, or \answerNA{}.
    \item[] Justification: Abstract, Section~\ref{sec:experiments}, Appendix~\ref{appendix:experimental_details}
    \item[] Guidelines:
    \begin{itemize}
        \item The answer NA means that paper does not include experiments requiring code.
        \item Please see the NeurIPS code and data submission guidelines (\url{https://nips.cc/public/guides/CodeSubmissionPolicy}) for more details.
        \item While we encourage the release of code and data, we understand that this might not be possible, so “No” is an acceptable answer. Papers cannot be rejected simply for not including code, unless this is central to the contribution (e.g., for a new open-source benchmark).
        \item The instructions should contain the exact command and environment needed to run to reproduce the results. See the NeurIPS code and data submission guidelines (\url{https://nips.cc/public/guides/CodeSubmissionPolicy}) for more details.
        \item The authors should provide instructions on data access and preparation, including how to access the raw data, preprocessed data, intermediate data, and generated data, etc.
        \item The authors should provide scripts to reproduce all experimental results for the new proposed method and baselines. If only a subset of experiments are reproducible, they should state which ones are omitted from the script and why.
        \item At submission time, to preserve anonymity, the authors should release anonymized versions (if applicable).
        \item Providing as much information as possible in supplemental material (appended to the paper) is recommended, but including URLs to data and code is permitted.
    \end{itemize}

\item {\bf Experimental setting/details}
    \item[] Question: Does the paper specify all the training and test details (e.g., data splits, hyperparameters, how they were chosen, type of optimizer, etc.) necessary to understand the results?
    \item[] Answer: \answerYes{} % Replace by \answerYes{}, \answerNo{}, or \answerNA{}.
    \item[] Justification: Appendix~\ref{appendix:hyperparameters}
    \item[] Guidelines:
    \begin{itemize}
        \item The answer NA means that the paper does not include experiments.
        \item The experimental setting should be presented in the core of the paper to a level of detail that is necessary to appreciate the results and make sense of them.
        \item The full details can be provided either with the code, in the Appendix, or as supplemental material.
    \end{itemize}

\item {\bf Experiment statistical significance}
    \item[] Question: Does the paper report error bars suitably and correctly defined or other appropriate information about the statistical significance of the experiments?
    \item[] Answer: \answerYes{} % Replace by \answerYes{}, \answerNo{}, or \answerNA{}.
    \item[] Justification: Appendix~\ref{appendix:statistical_details}, Appendix~\ref{appendix:d4rl}
    \item[] Guidelines:
    \begin{itemize}
        \item The answer NA means that the paper does not include experiments.
        \item The authors should answer "Yes" if the results are accompanied by error bars, confidence intervals, or statistical significance tests, at least for the experiments that support the main claims of the paper.
        \item The factors of variability that the error bars are capturing should be clearly stated (for example, train/test split, initialization, random drawing of some parameter, or overall run with given experimental conditions).
        \item The method for calculating the error bars should be explained (closed form formula, call to a library function, bootstrap, etc.)
        \item The assumptions made should be given (e.g., Normally distributed errors).
        \item It should be clear whether the error bar is the standard deviation or the standard error of the mean.
        \item It is OK to report 1-sigma error bars, but one should state it. The authors should preferably report a 2-sigma error bar than state that they have a 96\% CI, if the hypothesis of Normality of errors is not verified.
        \item For asymmetric distributions, the authors should be careful not to show in tables or figures symmetric error bars that would yield results that are out of range (e.g. negative error rates).
        \item If error bars are reported in tables or plots, The authors should explain in the text how they were calculated and reference the corresponding figures or tables in the text.
    \end{itemize}

\item {\bf Experiments compute resources}
    \item[] Question: For each experiment, does the paper provide sufficient information on the computer resources (type of compute workers, memory, time of execution) needed to reproduce the experiments?
    \item[] Answer: \answerYes{} % Replace by \answerYes{}, \answerNo{}, or \answerNA{}.
    \item[] Justification: Appendix~\ref{appendix:experimental_settings}
    \item[] Guidelines:
    \begin{itemize}
        \item The answer NA means that the paper does not include experiments.
        \item The paper should indicate the type of compute workers CPU or GPU, internal cluster, or cloud provider, including relevant memory and storage.
        \item The paper should provide the amount of compute required for each of the individual experimental runs as well as estimate the total compute. 
        \item The paper should disclose whether the full research project required more compute than the experiments reported in the paper (e.g., preliminary or failed experiments that didn't make it into the paper). 
    \end{itemize}
    
\item {\bf Code of ethics}
    \item[] Question: Does the research conducted in the paper conform, in every respect, with the NeurIPS Code of Ethics \url{https://neurips.cc/public/EthicsGuidelines}?
    \item[] Answer: \answerYes{} % Replace by \answerYes{}, \answerNo{}, or \answerNA{}.
    \item[] Justification: We have reviewed the NeurIPS Code of Ethics.
    \item[] Guidelines:
    \begin{itemize}
        \item The answer NA means that the authors have not reviewed the NeurIPS Code of Ethics.
        \item If the authors answer No, they should explain the special circumstances that require a deviation from the Code of Ethics.
        \item The authors should make sure to preserve anonymity (e.g., if there is a special consideration due to laws or regulations in their jurisdiction).
    \end{itemize}

\item {\bf Broader impacts}
    \item[] Question: Does the paper discuss both potential positive societal impacts and negative societal impacts of the work performed?
    \item[] Answer: \answerNA{} % Replace by \answerYes{}, \answerNo{}, or \answerNA{}.
    \item[] Justification: This paper does not have societal impact of the work performed.
    \item[] Guidelines:
    \begin{itemize}
        \item The answer NA means that there is no societal impact of the work performed.
        \item If the authors answer NA or No, they should explain why their work has no societal impact or why the paper does not address societal impact.
        \item Examples of negative societal impacts include potential malicious or unintended uses (e.g., disinformation, generating fake profiles, surveillance), fairness considerations (e.g., deployment of technologies that could make decisions that unfairly impact specific groups), privacy considerations, and security considerations.
        \item The conference expects that many papers will be foundational research and not tied to particular applications, let alone deployments. However, if there is a direct path to any negative applications, the authors should point it out. For example, it is legitimate to point out that an improvement in the quality of generative models could be used to generate deepfakes for disinformation. On the other hand, it is not needed to point out that a generic algorithm for optimizing neural networks could enable people to train models that generate Deepfakes faster.
        \item The authors should consider possible harms that could arise when the technology is being used as intended and functioning correctly, harms that could arise when the technology is being used as intended but gives incorrect results, and harms following from (intentional or unintentional) misuse of the technology.
        \item If there are negative societal impacts, the authors could also discuss possible mitigation strategies (e.g., gated release of models, providing defenses in addition to attacks, mechanisms for monitoring misuse, mechanisms to monitor how a system learns from feedback over time, improving the efficiency and accessibility of ML).
    \end{itemize}
    
\item {\bf Safeguards}
    \item[] Question: Does the paper describe safeguards that have been put in place for responsible release of data or models that have a high risk for misuse (e.g., pretrained language models, image generators, or scraped datasets)?
    \item[] Answer: \answerNA{} % Replace by \answerYes{}, \answerNo{}, or \answerNA{}.
    \item[] Justification: This paper does not release new datasets or models.
    \item[] Guidelines: 
    \begin{itemize}
        \item The answer NA means that the paper poses no such risks.
        \item Released models that have a high risk for misuse or dual-use should be released with necessary safeguards to allow for controlled use of the model, for example by requiring that users adhere to usage guidelines or restrictions to access the model or implementing safety filters. 
        \item Datasets that have been scraped from the Internet could pose safety risks. The authors should describe how they avoided releasing unsafe images.
        \item We recognize that providing effective safeguards is challenging, and many papers do not require this, but we encourage authors to take this into account and make a best faith effort.
    \end{itemize}

\item {\bf Licenses for existing assets}
    \item[] Question: Are the creators or original owners of assets (e.g., code, data, models), used in the paper, properly credited and are the license and terms of use explicitly mentioned and properly respected?
    \item[] Answer: \answerYes{} % Replace by \answerYes{}, \answerNo{}, or \answerNA{}.
    \item[] Justification: Section~\ref{sec:experiments}, Appendix~\ref{appendix:dv}
    \item[] Guidelines:
    \begin{itemize}
        \item The answer NA means that the paper does not use existing assets.
        \item The authors should cite the original paper that produced the code package or dataset.
        \item The authors should state which version of the asset is used and, if possible, include a URL.
        \item The name of the license (e.g., CC-BY 4.0) should be included for each asset.
        \item For scraped data from a particular source (e.g., website), the copyright and terms of service of that source should be provided.
        \item If assets are released, the license, copyright information, and terms of use in the package should be provided. For popular datasets, \url{paperswithcode.com/datasets} has curated licenses for some datasets. Their licensing guide can help determine the license of a dataset.
        \item For existing datasets that are re-packaged, both the original license and the license of the derived asset (if it has changed) should be provided.
        \item If this information is not available online, the authors are encouraged to reach out to the asset's creators.
    \end{itemize}

\item {\bf New assets}
    \item[] Question: Are new assets introduced in the paper well documented and is the documentation provided alongside the assets?
    \item[] Answer: \answerNA{} % Replace by \answerYes{}, \answerNo{}, or \answerNA{}.
    \item[] Justification: This paper does not release new assets.
    \item[] Guidelines:
    \begin{itemize}
        \item The answer NA means that the paper does not release new assets.
        \item Researchers should communicate the details of the dataset/code/model as part of their submissions via structured templates. This includes details about training, license, limitations, etc. 
        \item The paper should discuss whether and how consent was obtained from people whose asset is used.
        \item At submission time, remember to anonymize your assets (if applicable). You can either create an anonymized URL or include an anonymized zip file.
    \end{itemize}

\item {\bf Crowdsourcing and research with human subjects}
    \item[] Question: For crowdsourcing experiments and research with human subjects, does the paper include the full text of instructions given to participants and screenshots, if applicable, as well as details about compensation (if any)? 
    \item[] Answer: \answerNA{} % Replace by \answerYes{}, \answerNo{}, or \answerNA{}.
    \item[] Justification: This paper does not involve crowdsourcing nor research with human subjects.
    \item[] Guidelines:
    \begin{itemize}
        \item The answer NA means that the paper does not involve crowdsourcing nor research with human subjects.
        \item Including this information in the supplemental material is fine, but if the main contribution of the paper involves human subjects, then as much detail as possible should be included in the main paper. 
        \item According to the NeurIPS Code of Ethics, workers involved in data collection, curation, or other labor should be paid at least the minimum wage in the country of the data collector. 
    \end{itemize}

\item {\bf Institutional review board (IRB) approvals or equivalent for research with human subjects}
    \item[] Question: Does the paper describe potential risks incurred by study participants, whether such risks were disclosed to the subjects, and whether Institutional Review Board (IRB) approvals (or an equivalent approval/review based on the requirements of your country or institution) were obtained?
    \item[] Answer: \answerNA{} % Replace by \answerYes{}, \answerNo{}, or \answerNA{}.
    \item[] Justification:  This paper does not involve crowdsourcing nor research with human subjects.
    \item[] Guidelines:
    \begin{itemize}
        \item The answer NA means that the paper does not involve crowdsourcing nor research with human subjects.
        \item Depending on the country in which research is conducted, IRB approval (or equivalent) may be required for any human subjects research. If you obtained IRB approval, you should clearly state this in the paper. 
        \item We recognize that the procedures for this may vary significantly between institutions and locations, and we expect authors to adhere to the NeurIPS Code of Ethics and the guidelines for their institution. 
        \item For initial submissions, do not include any information that would break anonymity (if applicable), such as the institution conducting the review.
    \end{itemize}

\item {\bf Declaration of LLM usage}
    \item[] Question: Does the paper describe the usage of LLMs if it is an important, original, or non-standard component of the core methods in this research? Note that if the LLM is used only for writing, editing, or formatting purposes and does not impact the core methodology, scientific rigorousness, or originality of the research, declaration is not required.
    \item[] Answer: \answerNA{} % Replace by \answerYes{}, \answerNo{}, or \answerNA{}.
    \item[] Justification: Core method development in this research does not involve LLMs as any important, original, or non-standard components.
    \item[] Guidelines:
    \begin{itemize}
        \item The answer NA means that the core method development in this research does not involve LLMs as any important, original, or non-standard components.
        \item Please refer to our LLM policy (\url{https://neurips.cc/Conferences/2025/LLM}) for what should or should not be described.
    \end{itemize}

\end{enumerate}

%%%%%%%%%%%%%%%%%%%%%%%%%%%%%%%%%%%%%%%%%%%%%%%%%%%%%%%%%%%%

\newpage
\appendix
\section{Related Works}
\label{appendix:related_works}

\paragraph{Learning Priors in RL}
Several prior works in reinforcement learning have utilized trainable prior distributions to achieve great success in multi-task learning \cite{tirumala2022behavior} and skill learning \cite{pertsch2021accelerating, shi2022skill, hakhamaneshi2021hierarchical, ajay2020opal, kim2025nbdi}. \cite{tirumala2022behavior} introduce a behavior prior framework that learns a distribution over trajectories to encode reusable movement and interaction patterns across tasks. \cite{pertsch2021accelerating} propose SPiRL, which leverages a learned skill prior to guide exploration and improve downstream task performance. \cite{ajay2020opal} propose OPAL, which learns a latent prior over temporally extended primitive behaviors from offline data to improve downstream task performance in offline RL. \cite{hakhamaneshi2021hierarchical} propose FIST, which learns a future-conditioned skill prior from offline data and adapts it for few-shot imitation of unseen long-horizon tasks. \cite{shi2022skill} propose SkiMo, which jointly learns a skill prior and a skill dynamics model from offline data to enable accurate long-horizon planning and improve downstream task performance. \cite{kim2025nbdi} propose NBDI, which learns a termination prior from a state-action novelty module to guide the execution of variable-length skills in downstream tasks.

While these methods learn priors over abstract skills or trajectories, they do not operate in the context of diffusion models. To the best of our knowledge, Prior Guidance (PG) is the first to replace the standard normal prior distribution of a diffusion model with a learnable prior in the context of reinforcement learning.

\paragraph{Learning Priors in Diffusion Models}
Recent studies have explored methods to improve diffusion models by replacing the standard Gaussian prior with a more informative distribution. Notably, PriorGrad~\citep{lee2021priorgrad} proposes a data-dependent adaptive prior to accelerate training and improve sample quality in conditional generative tasks such as speech synthesis. While PriorGrad shares the motivation of improving the expressiveness and utility of the diffusion model's prior distribution, PriorGrad applies prior learning to the forward process in conditional generative models and leverages exact statistics from input conditions to define an instance-specific non-learned prior. In contrast, PG learns a parameterized prior distribution over the latent space of a behavior-cloned diffusion model, specifically targeting the reverse denoising process for high-return trajectory generation in offline RL.

\paragraph{Behavior-Regularized Objectives in Offline RL}
To mitigate policy degradation caused by overestimated value functions, a wide range of offline RL methods incorporate behavior-regularized objectives. \cite{wu2019behavior} introduce BRAC, a unified framework that employs behavior regularization to stabilize policy learning. \cite{peng2019advantage} propose Advantage-Weighted Regression (AWR), which formulates offline RL as supervised regression weighted by advantage estimates. \cite{nair2020awac} present AWAC, an actor-critic method that performs advantage-weighted supervised learning, enabling efficient offline pretraining and robust online fine-tuning. \cite{lee2021optidice} propose OptiDICE, which optimizes a behavior-regularized objective over stationary distributions, avoiding the need for value bootstrapping or policy gradients. \cite{xu2023offline} introduce SQL and EQL, which leverage in-sample value regularization to address out-of-distribution (OOD) actions. \cite{chen2023score} present SRPO, which directly regularizes the policy gradient using the score function of a pretrained diffusion model, thereby enabling behavior-regularized learning without sampling. \cite{kim2024relaxed} propose PORelDICE, which relaxes the positivity constraint in $f$-divergence regularization to better handle low-advantage samples. However, integrating behavior-regularized objectives into diffusion planners poses several challenges, such as requiring backpropagation through the iterative denoising process and making it difficult to handle a wide range of regularization functions in closed form.

\paragraph{Diffusion Models in Offline RL}
Inspired by the success of diffusion models across a variety of domains~\citep{ho2020denoising, dhariwal2021diffusion, austin2021structured, li2022diffusion, ho2022video, khachatryan2023text2video, kim2024adaptive}, there has been growing interest in adapting diffusion models to offline RL. One line of research replaces conventional uni-modal Gaussian policies with expressive diffusion-based policies. For example, \cite{wang2022diffusion} propose Diffusion-QL, which combines conditional diffusion modeling with Q-learning guidance to improve policy expressiveness and regularization. \cite{hansen2023idql} reinterpret IQL as an actor-critic algorithm and propose IDQL, which couples a generalized IQL critic with a diffusion-based behavior model. \cite{lu2023contrastive} introduce QGPO, which enables exact energy-guided diffusion sampling by aligning sampled trajectories with the energy landscape defined by a Q-function.

Another line of work leverages the temporal modeling capabilities of diffusion models to develop trajectory-level planners for long-horizon tasks. \cite{janner2022planning} introduce Diffuser, which formulates planning as conditional trajectory generation and enables flexible behavior synthesis via classifier-guided sampling. \cite{ajay2022conditional} propose Decision Diffuser, which formulates decision-making as return-conditioned generative modeling without relying on value estimation. \cite{liang2023adaptdiffuser} present AdaptDiffuser, a self-evolving planner that iteratively improves its performance through reward-guided trajectory generation and model fine-tuning. \cite{li2023hierarchical} propose HDMI, a hierarchical diffusion model that generates subgoals conditioned on returns and performs trajectory denoising toward each subgoal. \cite{chen2024simple} propose Hierarchical Diffuser, a two-level architecture that combines jumpy subgoal generation with fine-grained trajectory refinement. Finally, \cite{lu2025makes} conduct a comprehensive empirical study of diffusion planners and introduce Diffusion Veteran, a simple baseline derived from key design principles such as sampling strategy, backbone architecture, and action generation scheme. However, existing guided sampling methods in diffusion planners suffer from several limitations---such as suboptimal sample selection, distributional shift, and high computational cost---highlighting the need for novel guidance approaches that can more effectively balance performance and efficiency.

\paragraph{Concurrent Works} A concurrent work, DSRL~\cite{wagenmaker2025steering}, introduces a method that steers a behavior-cloned diffusion policy by optimizing a learnable prior through reinforcement learning. While this idea of guiding generative models via a learned prior shares the same high-level goal as our approach, there exist both conceptual similarities and key methodological differences between DSRL and our PG framework.

Both DSRL and PG enhance decision quality by learning a prior distribution over the noise space $(\mathbf{x}_T \sim \mathcal{N}(\mathbf{0},\mathbf{I}))$ that serves as input to the diffusion model's denoising process. In both methods, the prior is trained to maximize a value function defined over $\mathbf{x}_T$, which estimates the value of the corresponding denoised sample $\mathbf{x}_0$. This shared mechanism allows both methods to bias sampling toward high-value regions of the data or trajectory space.

However, there are fundamental differences. DSRL adopts a diffusion policy that directly maps noise to individual actions, whereas PG employs the diffusion planner that produces entire state trajectories and subsequently reconstructs executable actions via an inverse dynamics model. This trajectory-level formulation makes PG particularly effective in long-horizon or sparse-reward environments, where modeling temporal dependencies is critical for performance. Moreover, PG provides a theoretical foundation absent in DSRL. Specifically, we prove in~Proposition \ref{prop:prior} that, under sufficiently fine discretization, the optimization objective for the diffusion model can be equivalently reformulated as an optimization over the latent prior distribution. This equivalence arises from the bijective nature of the DDIM sampling process, which allows for an exact mapping between the data space and the latent prior. Finally, PG explicitly addresses the challenge of out-of-distribution (OOD) samples in offline reinforcement learning. While DSRL assumes the diffusion model generates only in-distribution actions and omits behavior constraints, our experiments (Figures~\ref{fig:MCSS} and \ref{fig:mcss_n}) reveal that diffusion models trained on offline datasets can still produce OOD samples that harm performance. To mitigate this issue, PG introduces behavior regularization by imposing an $f$-divergence penalty between the learned prior and the standard normal distribution. This explicitly penalizes deviation from the behavior distribution and induces regularization at the prior level.

\newpage
\section{Proof of Proposition~\ref{prop:prior}}
\label{appendix:proof_prior}
\pg*
\begin{proof}
Since $\mathbf{x}_0 = g_s(\mathbf{x}_T)$ is bijective mapping between $\mathbf{x}_T$ and $\mathbf{x}_0$, we can apply the change-of-variables formula:
\begin{align*}
d\mathbf{x}_0 = d\mathbf{x}_T \cdot \left\vert \det \left(\frac{\partial g_\mathbf{s}(\mathbf{x}_T)}{\partial \mathbf{x}_T} \right) \right\vert.
\end{align*}
Using this, we compute the conditional density $\tilde{\pi}_\beta(\mathbf{x}_0|\mathbf{s})$ as follows:
\begin{align*}
    \tilde{\pi}_\beta (\mathbf{x}_0 | \mathbf{s}) &= \int_{\mathbf{x}_T} p(\mathbf{x}_T) \cdot \delta(\mathbf{x}_0 = g_\mathbf{s}(\mathbf{x}_T)) d \mathbf{x}_T \\
    &=  \int_{\mathbf{x}_0} p(\mathbf{x}_T) \cdot \delta(\mathbf{x}_0 = g_\mathbf{s}(\mathbf{x}_T)) d \mathbf{x}_0 \cdot \left\vert \det \left(\frac{\partial g_\mathbf{s}(\mathbf{x}_T)}{\partial \mathbf{x}_T}\right) \right\vert^{-1} \\
    &= p(\mathbf{x}_T) \cdot \left\vert \det \left(\frac{\partial g_\mathbf{s}(\mathbf{x}_T)}{\partial \mathbf{x}_T}\right) \right\vert^{-1}.
\end{align*}
Similarly, the density $\pi_\psi(\mathbf{x}_0|\mathbf{s})$ can be obtained in the same way:
\begin{align*}
    \pi_{\psi} (\mathbf{x}_0 | \mathbf{s}) &= \int_{\mathbf{x}_T} p_\psi (\mathbf{x}_T | \mathbf{s}) \cdot \delta(\mathbf{x}_0 = g_\mathbf{s}(\mathbf{x}_T)) d \mathbf{x}_T \\
    &=  \int_{\mathbf{x}_0} p_\psi(\mathbf{x}_T|\mathbf{s}) \cdot \delta(\mathbf{x}_0 = g_\mathbf{s}(\mathbf{x}_T)) d \mathbf{x}_0 \cdot \left\vert \det \left(\frac{\partial g_\mathbf{s}(\mathbf{x}_T)}{\partial \mathbf{x}_T}\right) \right\vert^{-1} \\
    &= p_\psi(\mathbf{x}_T|\mathbf{s}) \cdot \left\vert \det \left(\frac{\partial g_\mathbf{s}(\mathbf{x}_T)}{\partial \mathbf{x}_T}\right) \right\vert^{-1}.
\end{align*}

Accordingly, we can reformulate the objective~\ref{eq:behavior_regularized_planner} as follows:
\begin{align*}
    \eqref{eq:behavior_regularized_planner} 
    &= \max_\psi \mathbb{E}_{\mathbf{s} \sim D, \mathbf{x}_0 \sim \pi_\psi(\cdot|\mathbf{s})} \left[ V(\mathbf{x}_0) - \alpha f \left( \frac{\pi_\psi(\mathbf{x}_0|\mathbf{s})}{\tilde{\pi}_\beta(\mathbf{x}_0|\mathbf{s})} \right) \right] \\
    &= \max_\psi \mathbb{E}_{\mathbf{s} \sim D, \mathbf{x}_T \sim p_\psi(\cdot|\mathbf{s})} \left[ V\left(g_\mathbf{s}(\mathbf{x}_T)\right) 
    - \alpha f \left( \frac{
    p_\psi(\mathbf{x}_T|\mathbf{s}) \cdot \left| \det \left( \frac{\partial g_\mathbf{s}(\mathbf{x}_T)}{\partial \mathbf{x}_T} \right) \right|^{-1}
    }{
    p(\mathbf{x}_T) \cdot \left| \det \left( \frac{\partial g_\mathbf{s}(\mathbf{x}_T)}{\partial \mathbf{x}_T} \right) \right|^{-1}
    } \right) \right] \\
    &= \max_\psi \mathbb{E}_{\mathbf{s} \sim D, \mathbf{x}_T \sim p_\psi(\cdot|\mathbf{s})} \left[ V\left(g_\mathbf{s}(\mathbf{x}_T)\right) 
    - \alpha f \left( \frac{p_\psi(\mathbf{x}_T|\mathbf{s})}{p(\mathbf{x}_T)} \right) \right]
\end{align*}
\end{proof}

\newpage
\section{Experimental Details}
\label{appendix:experimental_details}

\subsection{Experiment Details of Figure~\ref{fig:time_comparison}}
\label{appendix:time_comparison}

\begin{wrapfigure}{r}{0.5\textwidth}
\vspace{-0.8cm}
\begin{minipage}{0.48\textwidth}
\begin{algorithm}[H]
\caption{DQL (planner)}
\small
\label{alg:dql}
\textbf{Input:} Dataset $D$, Hyperparameter $\alpha$ \\
\textbf{Require:} Inverse Dynamics $\epsilon_{\omega}$, Critic $V$ \\
\textbf{Initialize:} Planner $\pi_\psi$
\begin{algorithmic}[1]
\State \textbf{\color{Mycolor}// Training}
\For{each iteration}
  \State Sample $\mathbf{s} \sim D$ and $\mathbf{x}_T \sim p(\mathbf{x}_T)$
  \State Update planner $\pi_\psi$ using Eq.~\eqref{eq:dql}
\EndFor

\State \textbf{\color{Mycolor}// Execution}
\State $\mathbf{s}^{(0)} = \text{env.reset()}$
\For{environment step $\tau$ = 0, 1, \dots}
    \State Sample $\mathbf{x}_0 \sim \pi_\psi(\cdot|\mathbf{s})$
    \State Predict action $\mathbf{a}^{(\tau)}$ using $\epsilon_\omega(\mathbf{x}_0)$
    \State $\mathbf{s}^{(\tau+1)} = \text{env.step}(\mathbf{a}^{(\tau)})$
\EndFor
\end{algorithmic}
\end{algorithm}
\end{minipage}
\end{wrapfigure}

In Figure~\ref{fig:time_comparison}, we introduce a new baseline referred to as DQL (planner), which directly trains the diffusion model using a behavior-regularized objective~\eqref{eq:behavior_regularized_planner}. Specifically, it employs the diffusion behavior cloning loss~\eqref{eq:diffusion_loss} as a surrogate for behavior regularization, resulting in the following objective:
\begin{align}
\label{eq:dql}
\max_\psi \mathbb{E}_{\mathbf{s} \sim D, \mathbf{x}_0 \sim \pi_\psi(\cdot|\mathbf{s})} \left[V(\mathbf{x}_0)\right] - \alpha \mathcal{L}(\psi),
\end{align}
where $\mathcal{L}(\cdot)$ denotes the behavior cloning loss defined in Eq.\eqref{eq:diffusion_loss}. This approach can be seen as the planner version of Diffusion Q-Learning (DQL)\citep{wang2022diffusion}, and we refer to it as DQL (planner) throughout the paper. The corresponding pseudo-code is provided in Algorithm~\ref{alg:dql}. We measure the training time of DQL (planner) after aligning all other settings with those of Prior Guidance to ensure a fair comparison.

In Figure~\ref{fig:time_comparison}, we compare the training time of PG with and without backpropagation through the denoising process to specifically assess the efficiency of learning the latent value function. 
To this end, we exclude the time required to train the planner $g_\mathbf{s}$, inverse dynamics model $\epsilon_\omega$, and critic $V$, which are shared across both variants in Algorithm~\ref{alg:pg}. Likewise, for DQL (planner), we report only the time taken to optimize the diffusion model for value maximization, excluding the time for training the inverse dynamics, critic, and the behavior cloning loss---which aligns with the planner training time in PG. This setup enables us to assess how much training time can be saved through latent value function learning.

\subsection{Experimental Settings}
\label{appendix:experimental_settings}
We evaluate Prior Guidance on the D4RL offline RL benchmark. For MuJoCo tasks, we report the average performance over 10 evaluation trajectories for each of 5 independently trained models. For Kitchen, AntMaze, and Maze2D tasks, we average over 100 evaluation trajectories for each of 5 independently trained models. In Prior Guidance, we adopt the same experimental settings for the planner $g_{\mathbf{s}}$, inverse dynamics model $\epsilon_\omega$, and critic $V$ as used in Diffusion Veteran (DV)~\citep{lu2025makes}. We conducted all experiments using four NVIDIA RTX 4090 GPUs.

\subsection{Statistical Details}
\label{appendix:statistical_details}
For all experiments, we report the mean and standard error of normalized scores computed over 5 training seeds. Additionally, in Figure~\ref{fig:mcss_n}, Figure~\ref{fig:table_pg}, and Table~\ref{tab:mixture}, we report the mean and standard error $\bar{\sigma}_{\text{avg}}$ of average normalized scores across multiple environments. The standard error $\bar{\sigma}_{\text{avg}}$ is computed using the error propagation formula based on the per-environment standard errors $\bar{\sigma}_i$, each estimated from 5 training seeds. Specifically, 
\begin{align*}
\bar{\sigma}_{\text{avg}} = \sqrt{\sum_{i=1}^n \bar{\sigma}_i^2} \bigg/ n, \quad \text{where}~ \bar{\sigma}_i = \frac{\sigma_i}{\sqrt{k}},
\end{align*}
with $k = 5$ denoting the number of training seeds, $n$ the number of environments, and $\sigma_i$ the standard deviation for environment $i$.

\newpage
\section{Diffusion Veteran (DV)}
\label{appendix:dv}

The pseudo-code for Diffusion Veteran (DV), which is identical to that of DV$^*$, is provided in Algorithm~\ref{alg:dv}.
\begin{algorithm}[ht!]
\caption{Diffusion Veteran (DV)}
\label{alg:dv}
\textbf{Input:} Dataset $D$, Planning horizon $H$, Planning stride $m$, Candidate num $N$ \\
\textbf{Initialize:} Planner $g_\mathbf{s}$, Inverse Dynamics $\epsilon_{\omega}$, Critic $V$ 
\begin{algorithmic}[1]
\State Compute discounted returns: $R^{(\tau)} = \sum_{h=0}^{\infty} \gamma^{h} r^{(\tau+h)} ~~\text{for every step } \tau$

\State \textbf{\color{Mycolor}// Training}
\For{each iteration}
  \State Sample $\mathbf{s}^{(\tau)}, \mathbf{s}^{(\tau + m)},\dots, s^{(\tau+(H-1)m))}$ and $\mathbf{a}^{(\tau)}, R^{(\tau)}$ from $D$ 
  \State Update planner $g_{\mathbf{s}}$ using Eq.~\eqref{eq:diffusion_loss} with $\mathbf{s}^{(\tau)}$ as input, $\mathbf{s}^{(\tau)}, \dots, \mathbf{s}^{(\tau+(H-1)m))}$ as target output
  \State Update inverse dynamics $\epsilon_{\omega}$ using Eq.~\eqref{eq:diffusion_loss} with $\mathbf{s}^{(\tau)}, \mathbf{s}^{(\tau+m)}$ as input, $\mathbf{a}^{(\tau)}$ as target output
  \State Update critic $V$ using MSE Loss with  $\mathbf{s}^{(\tau)}, \dots, \mathbf{s}^{(\tau+(H-1)m))}$ as input, $R^{(\tau)}$ as target output
\EndFor

\State \textbf{\color{Mycolor}// Execution (MCSS)}
\State $\mathbf{s}^{(0)} = \text{env.reset()}$
\For{environment step $\tau$ = 0, 1, \dots}
  \State Randomly generate $N$ candidate trajectories using $g_{\mathbf{s}}$ while fixing the first state as $\mathbf{s}^{(\tau)}$
  \State Select the trajectory with the highest estimated value $V$ among the $N$ candidates
  \State Generate $\mathbf{a}^{(\tau)}$ from the best trajectory using the inverse dynamics model $\epsilon_{\omega}$
  \State $\mathbf{s}^{(\tau+1)} = \text{env.step}(\mathbf{a}^{(\tau)})$
\EndFor
\end{algorithmic}
\end{algorithm}

Full results corresponding to Figure~\ref{fig:mcss_n}, along with the comparison between the original implementation of DV and Prior Guidance, are provided in Table~\ref{tab:dv_n_pg}.
\begin{table}[ht!]
    \centering
    \caption{Normalized scores on the D4RL benchmark. we compare Prior Guidance (PG) with the original implementation DV and DV* under different $N$s. We compute the mean and the standard error over 5 random seeds.}
    \resizebox{\textwidth}{!}{
    \begin{tabular}{cccccccc}
        \toprule
         & \multicolumn{5}{c}{DV*} & \multicolumn{1}{c}{DV} & \multicolumn{1}{c}{PG} \\
        \toprule
        $N$ & 1 & 5 & 10 & 50 (default) & 500 & 50 & 1 \\
        \midrule
        walker2d-medium-v2 & 70.8 $\pm$ 2.2 & 81.1 $\pm$ 0.2 & 80.6 $\pm$ 0.3 & 79.5 $\pm$ 0.3 & 79.5 $\pm$ 0.7 & 82.8 $\pm$ 0.5 & 82.3 $\pm$ 0.2 \\
        walker2d-medium-replay-v2 & 40.3 $\pm$ 1.8 & 82.5 $\pm$ 0.9 & 84.1 $\pm$ 0.4 & 83.5 $\pm$ 2.5 & 85.9 $\pm$ 0.4 & 85.0 $\pm$ 0.1 & 83.7 $\pm$ 1.0 \\
        walker2d-medium-expert-v2 & 96.1 $\pm$ 2.1 & 108.9 $\pm$ 0.1 & 109.0 $\pm$ 0.1 & 109.0 $\pm$ 0.1 & 108.9 $\pm$ 0.2 & 109.2 $\pm$ 0.0 & 109.4 $\pm$ 0.1 \\
        hopper-medium-v2 & 47.2 $\pm$ 0.9 & 67.4 $\pm$ 2.8 & 70.1 $\pm$ 3.5 & 84.1 $\pm$ 2.5 & 91.5 $\pm$ 1.7 & 83.6 $\pm$ 1.2 & 97.5 $\pm$ 0.6 \\
        hopper-medium-replay-v2 & 52.4 $\pm$ 2.1 & 91.2 $\pm$ 0.2 & 91.0 $\pm$ 0.1 & 91.3 $\pm$ 0.1 & 91.3 $\pm$ 0.2 & 91.9 $\pm$ 0.0 & 91.3 $\pm$ 0.3 \\
        hopper-medium-expert-v2 & 76.2 $\pm$ 5.9 & 110.0 $\pm$ 0.3 & 110.2 $\pm$ 0.5 & 109.9 $\pm$ 1.0 & 110.6 $\pm$ 0.9 & 110.0 $\pm$ 0.5 & 110.4 $\pm$ 0.1 \\
        halfcheetah-medium-v2 & 41.4 $\pm$ 0.6 & 47.5 $\pm$ 0.1 & 49.2 $\pm$ 0.0 & 50.9 $\pm$ 0.1 & 52.3 $\pm$ 0.1 & 50.4 $\pm$ 0.0 & 45.6 $\pm$ 0.5 \\
        halfcheetah-medium-replay-v2 & 35.4 $\pm$ 1.4 & 43.2 $\pm$ 0.6 & 45.4 $\pm$ 0.2 & 46.4 $\pm$ 0.2 &46.5 $\pm$ 0.6 & 45.8 $\pm$ 0.1 & 46.4 $\pm$ 0.4 \\
        halfcheetah-medium-expert-v2  & 82.3 $\pm$ 1.4 & 91.8 $\pm$ 0.2 & 91.6 $\pm$ 1.2 & 92.3 $\pm$ 0.6 & 87.9 $\pm$ 2.0 & 92.7 $\pm$ 0.3 & 95.2 $\pm$ 0.1 \\
        \midrule
        \textbf{Average} & 60.2 & 80.4 & 81.2 & 83.0 & 83.8 & 83.5 & \textbf{84.6} \\
        \midrule 
        kitchen-mixed-v0 & 52.4 $\pm$ 0.7 & 65.1 $\pm$ 1.2 & 70.3 $\pm$ 1.7 & 73.3 $\pm$ 0.6 & 67.5 $\pm$ 3.0 & 73.6 $\pm$ 0.1 & 74.6 $\pm$ 0.4 \\
        kitchen-partial-v0 & 53.4 $\pm$ 1.0 & 77.7 $\pm$ 2.0 & 80.6 $\pm$ 1.8 & 76.2 $\pm$ 4.3 & 66.8 $\pm$ 5.4 & 94.0 $\pm$ 0.3 & 88.0 $\pm$ 3.1\\
        \midrule
        \textbf{Average} & 52.9 & 71.4 & 75.5 & 74.8 & 67.2 & \textbf{83.8} & 81.3 \\
        \midrule
        antmaze-medium-play-v2 & 66.6 $\pm$ 3.8 & 86.4 $\pm$ 1.4 & 86.0 $\pm$ 1.2 & 80.8 $\pm$ 1.3 & 62.2 $\pm$ 3.2 & 89.0 $\pm$ 1.6 & 87.8 $\pm$ 1.5 \\
        antmaze-medium-diverse-v2  & 63.6 $\pm$ 2.0 & 72.4 $\pm$ 2.4 & 71.8 $\pm$ 4.0 & 82.0 $\pm$ 3.6 & 84.6 $\pm$ 1.6 & 87.4 $\pm$ 1.6 & 87.3 $\pm$ 1.7 \\
        antmaze-large-play-v2  & 74.8 $\pm$ 1.3 & 79.2 $\pm$ 2.0 & 80.6 $\pm$ 3.4 & 80.8 $\pm$ 2.8 & 81.0 $\pm$ 2.0 & 76.4 $\pm$ 2.0 & 82.4 $\pm$ 2.1 \\
        antmaze-large-diverse-v2  & 78.8 $\pm$ 2.6 & 78.2 $\pm$ 1.7 & 77.6 $\pm$ 3.0 & 76.0 $\pm$ 2.3 & 72.0 $\pm$ 1.4 & 80.0 $\pm$ 1.8 & 76.0 $\pm$ 1.9 \\
        \midrule
        \textbf{Average} & 71.0 & 79.1 & 79.0 & 79.9 & 75.0 & 83.2 & \textbf{83.4} \\
        \midrule
        maze2d-umaze-v1 & 41.5 $\pm$ 3.4 & 97.1 $\pm$ 3.5 & 111.0 $\pm$ 2.7 & 133.8 $\pm$ 1.9 & 139.7 $\pm$ 1.5 & 136.6 $\pm$ 1.3 & 139.2 $\pm$ 1.3 \\
        maze2d-medium-v1 & 23.0 $\pm$ 3.5 & 95.3 $\pm$ 7.8 & 123.8 $\pm$ 5.9 & 144.1 $\pm$ 2.1 & 151.2 $\pm$ 2.6 & 150.7 $\pm$ 1.0 & 159.5 $\pm$ 0.8 \\
        maze2d-large-v1 & 54.6 $\pm$ 5.1 & 183.9 $\pm$ 5.3 & 190.8 $\pm$ 5.0 & 197.6 $\pm$ 4.2 & 201.8 $\pm$ 5.5 & 203.6 $\pm$ 1.4 & 195.2 $\pm$ 2.3 \\
        \midrule 
        \textbf{Average} & 39.7 & 125.4 & 141.9 & 158.5 & 164.2 & 163.6 & \textbf{164.6} \\
        \bottomrule
    \end{tabular}}
    \label{tab:dv_n_pg}
\end{table}

\section{Full results of Table~\ref{tab:d4rl}}
\label{appendix:d4rl}

\begin{landscape}
\begin{table*}[]
    \centering
    \caption{Normalized scores on the D4RL benchmark. We compute the mean and standard error over 5 random seeds.}
    \resizebox{\linewidth}{!}{\begin{tabular}{ccccccccccccccc}
        \toprule
        \multirow{2}{*}{\textbf{Dataset}}& &\multicolumn{2}{c}{\textbf{Gaussian policies}} &\multicolumn{5}{c}{\textbf{Diffusion policies}} &\multicolumn{6}{c}{\textbf{Diffusion planners}} \\
        \cmidrule(r){3-4} \cmidrule(lr){5-9} \cmidrule(lr){10-15}
         & \textbf{BC}  & \textbf{CQL} & \textbf{IQL} & \textbf{SfBC}& \textbf{DQL} & \textbf{IDQL} &  \textbf{QGPO} & \textbf{SRPO} & \textbf{Diffuser} & \textbf{AD} & \textbf{DD} & \textbf{HD} &\textbf{DV*} &\textbf{PG} \\
        \midrule
        Walker2d-M       & 6.6  &79.2 &78.3 &77.9$\pm$2.5 &\textbf{\color{Mycolor}87.0}$\pm$0.9 &82.5 &86.0$\pm$0.7 &84.4$\pm$1.8 &79.6$\pm$0.6 &84.4$\pm$2.6 &82.5$\pm$1.4 &84.0$\pm$0.6 &79.5$\pm$0.3 &82.3$\pm$0.2\\
        Walker2d-M-R     & 11.8  &26.7 &73.9 &65.1$\pm$5.6  &\textbf{\color{Mycolor}95.5}$\pm$1.5 &85.1 &84.4$\pm$4.1 &84.6$\pm$2.9 &70.6$\pm$1.6 &84.7$\pm$3.1 &75.0$\pm$4.3 &84.1$\pm$2.2 &83.5$\pm$2.5 &83.7$\pm$1.0\\
        Walker2d-M-E     & 6.4  &111.0 &109.6 &109.8$\pm$0.2 &110.1$\pm$0.3 &112.7 &110.7$\pm$0.6 &\textbf{\color{Mycolor}114.0}$\pm$0.9 &106.9$\pm$0.2 &108.2$\pm$0.8 &108.8$\pm$1.7 &107.1$\pm$1.1 &109.0$\pm$0.1 &109.4$\pm$0.1\\
        Hopper-M         & 29.0  &58.0 &66.3 &57.1$\pm$4.1 &90.5$\pm$4.6 &65.4 &98.0$\pm$2.6 &95.5$\pm$0.8 &74.3$\pm$1.4 &96.6$\pm$2.7 &79.3$\pm$3.6 &\textbf{\color{Mycolor}99.3}$\pm$0.3 &84.1$\pm$2.5 &97.5$\pm$0.6\\ 
        Hopper-M-R       & 11.3  &48.6 &94.7 &86.2$\pm$9.1 &\textbf{\color{Mycolor}101.3}$\pm$0.6  &92.1  &96.9$\pm$2.6 &101.2$\pm$0.4 &93.6$\pm$0.4 &92.2$\pm$1.5 &100.0$\pm$0.7 &94.7$\pm$0.7 &91.3$\pm$0.1 &91.3$\pm$0.3\\
        Hopper-M-E       & 111.9 & 98.7 &91.5 &108.6$\pm$2.1 &111.1$\pm$1.3  &108.6 &108.0$\pm$2.5 &100.1$\pm$5.7 &103.3$\pm$1.3 &111.6$\pm$2.0 & 111.8$\pm$1.8 &\textbf{\color{Mycolor}115.3}$\pm$1.1 &109.9$\pm$1.0 &110.4$\pm$0.1\\
        HalfCheetah-M    & 36.1   & 44.4 & 47.4 & 45.9$\pm$2.2 & 51.1$\pm$0.5 & 51.0 & 54.1$\pm$0.4 &\textbf{\color{Mycolor}60.4}$\pm$0.3 & 42.8$\pm$0.3 & 44.2$\pm$0.6 & 49.1$\pm$1.0 & 46.7$\pm$0.2 &50.9$\pm$0.1 &45.6$\pm$0.5\\
        HalfCheetah-M-R  & 38.4   & 46.2 & 44.2 & 37.1$\pm$1.7 & 47.8$\pm$0.3 & 45.9 & 47.6$\pm$1.4 &\textbf{\color{Mycolor}51.4}$\pm$1.4 & 37.7$\pm$0.5 & 38.3$\pm$0.9 & 39.3$\pm$4.1 & 38.1$\pm$0.7 &46.4$\pm$0.2 &46.4$\pm$0.4\\
        HalfCheetah-M-E  & 35.8   & 62.4 & 86.7 & 92.6$\pm$0.5 &\textbf{\color{Mycolor}96.8}$\pm$0.3 & 95.9 & 93.5$\pm$0.3 &92.2$\pm$1.2 & 88.9$\pm$0.3 & 89.6$\pm$0.8 & 90.6$\pm$1.3 & 92.5$\pm$0.3 &92.3$\pm$0.6 &95.2$\pm$0.1\\
        \midrule
        \textbf{Average} & 31.9 & 63.9 & 77.0 & 75.6 & \textbf{\color{Mycolor}87.9} & 82.1  & 86.6 &87.1 & 77.5 & 83.3 & 81.8 & 84.6 &83.0 &84.6\\
        \midrule
        Kitchen-M    & 47.5 & 51.0 &51.0 &45.4$\pm$1.6 & 62.6$\pm$5.1 & 66.5 & -- & -- & 52.5$\pm$2.5 & 51.8$\pm$0.8 & 65.0$\pm$2.8 & 71.7$\pm$2.7  &73.3$\pm$0.6 &\textbf{\color{Mycolor}74.6}$\pm$0.4\\        
        Kitchen-P  & 33.8 & 49.8 &46.3 &47.9$\pm$4.1 & 60.5$\pm$6.9 & 66.7  & -- & -- & 55.7$\pm$1.3 & 55.5$\pm$0.4 & 57.0$\pm$2.5  & 73.3$\pm$1.4  &76.2$\pm$4.3 &\textbf{\color{Mycolor}88.0}$\pm$3.1\\
        \midrule
        \textbf{Average} & 40.7 & 50.4 & 48.7 & 46.7 & 61.6 & 66.6  & -- & -- & 54.1 & 53.7 & 61.0 & 72.5 &74.8 &\textbf{\color{Mycolor}81.3}\\
        \midrule
        Antmaze-M-P      &0.0 &14.9 &71.2 &81.3$\pm$2.6 &76.6$\pm$10.8 &84.5 &83.6$\pm$4.4 &80.7$\pm$2.9 &6.7$\pm$5.7 &12.0$\pm$7.5 &8.0$\pm$4.6 &--  &80.8$\pm$1.3 &\textbf{\color{Mycolor}87.8}$\pm$1.5\\
        Antmaze-M-D      &0.0 &15.8 &70.0 &82.0$\pm$3.1 &78.6$\pm$10.3 &84.8 &83.8$\pm$3.5 &75.0$\pm$5.0 &2.0$\pm$1.6 &6.0$\pm$3.3 &4.0$\pm$2.8 &\textbf{\color{Mycolor}88.7}$\pm$8.1  &82.0$\pm$3.6 &87.3$\pm$1.7\\
        Antmaze-L-P      &0.0 &53.7 &39.6 &59.3$\pm$14.3 &46.4$\pm$8.3 &63.5 &66.6$\pm$9.8 &53.6$\pm$5.1 &17.3$\pm$1.9 &5.3$\pm$3.4 &0.0$\pm$0.0 &-- &80.8$\pm$2.8 &\textbf{\color{Mycolor}82.4}$\pm$2.1\\
        Antmaze-L-D      &0.0 &61.2 &47.5 &45.5$\pm$6.6 &56.6$\pm$7.6 &67.9 &64.8$\pm$5.5  &53.6$\pm$2.6 &27.3$\pm$2.4 &8.7$\pm$2.5 &0.0$\pm$0.0 &\textbf{\color{Mycolor}83.6}$\pm$5.8  &76.0$\pm$2.3 &76.0$\pm$1.9\\
        \midrule
        \textbf{Average} &0.0 &36.4 &57.1 &67.0 &64.6 &75.2 &74.7 &65.7 &13.3 &8.0 &3.0 & --  &79.9 &\textbf{\color{Mycolor}83.4}\\
        \midrule
        Maze2D-U      & 3.8 &5.7 &47.4 &73.9$\pm$6.6 &-- &57.9 &-- & -- &113.9$\pm$3.1 &135.1$\pm$5.8 &-- &128.4$\pm$3.6  &133.8$\pm$1.9 &\textbf{\color{Mycolor}139.2}$\pm$1.3\\
        Maze2D-M     & 30.3 &5.0 &34.9 &73.8$\pm$2.9 &-- &89.5 &-- & -- &121.5$\pm$2.7 &129.9$\pm$4.6 &-- &135.6$\pm$3.0  &144.1$\pm$2.1 &\textbf{\color{Mycolor}159.5}$\pm$0.8\\
        Maze2D-L      & 5 &12.5 &58.6  &74.4$\pm$1.7 &-- &90.1 &-- & -- &123.0$\pm$6.4   &167.9$\pm$5.0 &-- &155.8$\pm$2.5     &\textbf{\color{Mycolor}197.6}$\pm$4.2 &195.2$\pm$2.3\\
        \midrule
        \textbf{Average} &13.0 &7.7 &47.0 &74.0 &-- &79.2 &-- & -- &119.5 &144.3 &-- &139.9    &158.5 &\textbf{\color{Mycolor}164.6}\\
        \bottomrule
    \end{tabular}}
\end{table*}
\end{landscape}

\newpage
\section{Hyperparameters}
\label{appendix:hyperparameters}
This section outlines the hyperparameters used to train Prior Guidance. The planner $g_s$, inverse dynamics model $\epsilon_\omega$, and critic $V$ all follow the same settings as those used in Diffusion Veteran. Detailed hyperparameter settings are provided in Table~\ref{tab:hyperparams}. 

\begin{table}[ht]
\centering
\caption{Hyperparameter settings used for Prior Guidance on the D4RL benchmark.}
\label{tab:hyperparams}
\resizebox{\textwidth}{!}{\begin{tabular}{lllll}
\toprule
\textbf{Settings} & \textbf{MuJoCo} & \textbf{Kitchen} &\textbf{Antmaze} &\textbf{Maze2d} \\
\midrule
Batch Size & 128 & 128 & 128 & 128 \\
State-Action Generation & Separate &Separate &Separate &Separate \\
Advantage Weighting & True & False & False & False \\
Inverse Dynamic & Diffusion &Diffusion &Diffusion &Diffusion \\
Time Credit Assignment &0.997 &0.997 &1.0 &1.0 \\
Planner Net. Backbone & Transformer & Transformer & Transformer & Transformer~\citep{vaswani2017attention} \\
Transformer Hidden & 256 & 256 & 256 & 256 \\
Transformer Block & 2 & 2 & 2 & 8 \\
Planner Solver & DDIM & DDIM & DDIM & DDIM~\citep{song2020denoising} \\
Planner Sampling Steps & 20 & 20 & 20 & 20 \\
Planner Training Steps & 1M & 1M & 1M & 1M \\
Planner Temperature & 1 & 1 & 1 & 1  \\
Planner Learning Rate &2e-4 &2e-4 &2e-4 &2e-4 \\
Planner Optimizer &AdamW &AdamW &AdamW &AdamW~\citep{loshchilov2017decoupled} \\
Planning Horizon & 4 & 32 & 32 & 40 \\
Planning Stride & 1 & 1 & 15 & 25  \\
Inverse Dynamics Net. Backbone & MLP & MLP & MLP & MLP \\
Inverse Dynamics Hidden & 256 & 256 & 256 & 256 \\
Inverse Dynamics Solver & DDPM & DDPM & DDPM & DDPM~\citep{ho2020denoising} \\
Inverse Dynamics Sampling Steps & 10 & 10 & 10 & 10 \\
Inverse Dynamics Training Steps & 1M & 1M & 1M & 1M \\
Policy Temperature & 0.5 & 0.5 & 0.5 & 0.5 \\
Policy Learning Rate & 3e-4 & 3e-4 & 3e-4 & 3e-4 \\
Policy Optimizer &AdamW &AdamW &AdamW &AdamW \\
Value Learning Rate & 3e-4 & 3e-4 & 3e-4 & 3e-4 \\
Value Net. Backbone &Transformer &Transformer &Transformer &Transformer \\
Value Optimizer &Adam &Adam &Adam &Adam~\citep{kingma2014adam} \\
\midrule
Prior Network &GRU &GRU &GRU &GRU~\citep{chung2014empirical} \\
Prior Learning Rate &3e-4 &3e-4 &3e-4 &3e-4 \\
Prior Optimizer &AdamW &AdamW &AdamW &AdamW \\
Prior Hidden &256 &256 &256 &256 \\ 
Latent Value Learning Rate & 3e-4 & 3e-4 & 3e-4 & 3e-4 \\
Latent Value Net. Backbone &Transformer &Transformer &Transformer &Transformer \\
Latent Value Optimizer &Adam &Adam &Adam &Adam \\
Behavior Regularization &KL &KL &KL &KL \\
\bottomrule
\end{tabular}}
\end{table}

\newpage
In Table~\ref{tab:f}, we present ablation studies on the choice of behavior regularization function $f$. The corresponding hyperparameter $\alpha$ are listed in Table~\ref{tab:kl} and~\ref{tab:chi}.
\begin{table}[ht]
    \centering
    \caption{$\alpha$ used for KL and Reverse KL-divergence on the D4RL benchmark.}
    \begin{tabular}{cccc}
        \toprule
         $\alpha$ &KL &Reverse KL \\
         \midrule
         walker2d-medium-v2           &0.1   &0.1      \\
         walker2d-medium-replay-v2    &0.01  &0.01    \\
         walker2d-medium-expert-v2    &0.01  &0.01   \\
         hopper-medium-v2             &0.01  &0.01    \\
         hopper-medium-replay-v2      &0.01  &0.001    \\
         hopper-medium-expert-v2      &0.01  &0.01   \\
         halfcheetah-medium-v2        &0.001 &0.0001    \\
         halfcheetah-medium-replay-v2 &0.001 &0.001  \\
         halfcheetah-medium-expert-v2 &0.01  &0.01  \\ 
         \midrule
         kitchen-mixed-v0 &1.0   &10.0\\
         kitchen-partial-v0 &1.0 &10.0\\
         \midrule
         antmaze-medium-play-v2 &1.0     &1.0\\
         antmaze-medium-diverse-v2 &1.0  &0.1\\
         antmaze-large-play-v2 &50.0     &1.0 \\
         antmaze-large-diverse-v2 &10.0  &10.0\\
         \midrule
         maze2d-umaze-v1 &1.0  &10.0\\
         maze2d-medium-v1 &0.1 &1.0\\
         maze2d-large-v1  &0.1 &1.0\\
         \bottomrule
    \end{tabular}
    \label{tab:kl}
\end{table}

\begin{table}[ht]
    \centering
    \caption{$\alpha$ used for Pearson $\chi^2$-divergence on the D4RL benchmark.}
    \begin{tabular}{ccc}
        \toprule
         $\alpha$ & Pearson $\chi^2$ \\
         \midrule
         walker2d-medium-v2           &1e-4       \\
         walker2d-medium-replay-v2    &1e-7      \\
         walker2d-medium-expert-v2    &1e-6     \\
         hopper-medium-v2             &1e-8      \\
         hopper-medium-replay-v2      &1e-6      \\
         hopper-medium-expert-v2      &1e-5     \\
         halfcheetah-medium-v2        &1e-4     \\
         halfcheetah-medium-replay-v2 &1e-6   \\
         halfcheetah-medium-expert-v2 &1e-8    \\ 
         \midrule
         kitchen-mixed-v0   &1e-6\\
         kitchen-partial-v0 &1e-7\\
         \midrule
         antmaze-medium-play-v2    &1e-4     \\
         antmaze-medium-diverse-v2 &1e-2 \\
         antmaze-large-play-v2     &1e-4      \\
         antmaze-large-diverse-v2  &1e-2  \\
         \midrule
         maze2d-umaze-v1 &1e-7\\
         maze2d-medium-v1 &1e-8\\
         maze2d-large-v1  &1e-6 \\
         \bottomrule
    \end{tabular}
    \label{tab:chi}
\end{table}

\newpage
In Table~\ref{tab:mixture}, we report the average normalized scores on MuJoCo and AntMaze tasks when using a multi-modal Gaussian prior. The corresponding hyperparameter $\alpha$ are listed in Table~\ref{tab:mixture_alpha}.
\begin{table}[ht]
    \centering
    \caption{$\alpha$ used for multi-modal Gaussian prior on the D4RL benchmark.}
    \begin{tabular}{cc}
        \toprule
         $\alpha$ &\textbf{multi-modal}\\
         \midrule
         walker2d-medium-v2           &0.01       \\
         walker2d-medium-replay-v2    &0.01      \\
         walker2d-medium-expert-v2    &0.001     \\
         hopper-medium-v2             &0.01      \\
         hopper-medium-replay-v2      &0.0001      \\
         hopper-medium-expert-v2      &0.01     \\
         halfcheetah-medium-v2        &0.0001     \\
         halfcheetah-medium-replay-v2 &0.001   \\
         halfcheetah-medium-expert-v2 &0.1    \\ 
         \midrule
         antmaze-medium-play-v2    &0.1     \\
         antmaze-medium-diverse-v2 &0.1 \\
         antmaze-large-play-v2     &0.1      \\
         antmaze-large-diverse-v2  &1.0  \\
         \bottomrule
    \end{tabular}
    \label{tab:mixture_alpha}
\end{table}

\section{Extensive Results}
\label{appendix:extensive_results}
Full results of Figure~\ref{fig:table_pg} can be found in Table~\ref{tab:bp_full}.
\begin{table}[ht]
    \centering
    \caption{Comparison of Prior Guidance with and without backpropagation through the denoising process (denoted as PG w/ Bp and PG w/o Bp, respectively) on the D4RL benchmark. We compute the mean and the standard error over 5 random seeds.}
    \begin{tabular}{ccc}
        \toprule
         &PG w/ Bp &PG w/o Bp \\
        \midrule
        walker2d-medium-v2           &81.3$\pm$0.2  &82.3$\pm$0.2\\
        walker2d-medium-replay-v2    &83.2$\pm$2.1  &83.7$\pm$1.0\\
        walker2d-medium-expert-v2    &109.6$\pm$0.2 &109.4$\pm$0.1\\
        hopper-medium-v2           &96.9$\pm$1.4 &97.5$\pm$0.6\\
        hopper-medium-replay-v2    &94.5$\pm$0.7 &91.3$\pm$0.3\\
        hopper-medium-expert-v2    &110.4$\pm$0.1 &110.4$\pm$0.1\\
        halfcheetah-medium-v2           &54.7$\pm$0.2 &45.6$\pm$ 0.5\\
        halfcheetah-medium-replay-v2   &46.6$\pm$0.1 &46.4$\pm$0.4\\
        halfcheetah-medium-expert-v2     &95.1$\pm$0.2 &95.2$\pm$0.1\\
        \midrule
        \textbf{Average} &85.8                &84.6\\
        \midrule 
        kitchen-mixed-v0    &74.3$\pm$0.6    &74.6$\pm$0.4\\
        kitchen-partial-v0   &87.6$\pm$4.8    &88.0$\pm$3.1\\
        \midrule
        \textbf{Average}  &81.0            &81.3\\
        \midrule
        antmaze-medium-play-v2  &87.0$\pm$1.5 &87.8$\pm$1.5\\
        antmaze-medium-diverse-v2  &89.6$\pm$1.8 &87.3$\pm$1.7\\
        antmaze-large-play-v2 &82.2$\pm$2.4 &82.4$\pm$2.1 \\
        antmaze-large-diverse-v2 &77.0$\pm$3.2 &76.0$\pm$1.9\\
        \midrule
        \textbf{Average}  &84.0 &83.4\\
        \midrule
        maze2d-umaze-v1  &137.1$\pm$1.5 &139.2$\pm$1.3\\
        maze2d-medium-v1 &159.4$\pm$3.5 &159.5$\pm$0.8\\
        maze2d-large-v1  &196.2$\pm$2.6 &195.2$\pm$2.3\\
        \midrule 
        \textbf{Average}  &164.2 &164.6\\
        \bottomrule
    \end{tabular}
    \label{tab:bp_full}
\end{table}

\newpage
 Full results of Table~\ref{tab:mixture} can be found in Table~\ref{tab:mixture_full}.
\begin{table}[ht]
    \centering
    \caption{Normalized scores of PG using multi-modal Gaussian prior on MuJoCo and Antmaze tasks.  We compute the mean and the standard error over 5 random seeds.}
    \begin{tabular}{cc}
        \toprule
         &\textbf{multi-modal}\\
        \midrule
        walker2d-medium-v2           &81.3$\pm$0.2\\
        walker2d-medium-replay-v2    &82.5$\pm$0.2\\
        walker2d-medium-expert-v2    &108.7$\pm$0.1\\
        hopper-medium-v2           &88.9$\pm$2.3\\
        hopper-medium-replay-v2    &96.2$\pm$0.0\\
        hopper-medium-expert-v2    &110.5$\pm$0.1\\
        halfcheetah-medium-v2           &44.3$\pm$2.1\\
        halfcheetah-medium-replay-v2   &43.0$\pm$0.2\\
        halfcheetah-medium-expert-v2     &84.9$\pm$0.5\\
        \midrule
        \textbf{Average} &82.3\\
        \midrule
        antmaze-medium-play-v2  &87.2$\pm$1.2\\
        antmaze-medium-diverse-v2  &93.0$\pm$1.4\\
        antmaze-large-play-v2 &83.4$\pm$1.4\\
        antmaze-large-diverse-v2 &79.4$\pm$1.0\\
        \midrule
        \textbf{Average}  &85.8\\
        \bottomrule
    \end{tabular}
    \label{tab:mixture_full}
\end{table}

\section{Ablation Stuides}
\label{appendix:ablation_stuides}
We conducted ablation studies on the behavior regularization coefficient $\alpha$ of PG across MuJoCo, Kitchen, AntMaze, and Maze2D tasks. The normalized scores of PG for different values of $\alpha$ are reported in Table~\ref{tab:alpha_mujoco},~\ref{tab:alpha_kitchen},~\ref{tab:alpha_antmaze} and~\ref{tab:alpha_maze2d}.

\begin{table}[ht!]
    \centering
    \caption{Ablation study of $\alpha$ in MuJoCo tasks. We compute the mean and the standard error over 5 random seeds.}
    \begin{tabular}{cccccc}
        \toprule
        $\alpha$ &0.001 &0.01 &0.1 &1.0 &10.0\\
        \midrule
        walker2d-medium-v2           &77.2$\pm$0.6  &81.4$\pm$0.3 &82.3$\pm$0.2 &74.5$\pm$0.8 &70.8$\pm$2.2\\
        walker2d-medium-replay-v2    &61.9$\pm$1.0 &83.7$\pm$1.2 &78.3$\pm$0.8 &52.3$\pm$1.5 &50.6$\pm$1.4\\
        walker2d-medium-expert-v2    &109.4$\pm$0.1 &109.1$\pm$0.1 &108.6$\pm$0.2 &108.4$\pm$0.1 &108.5$\pm$0.2\\
        hopper-medium-v2           &93.5$\pm$0.4 &97.5$\pm$0.6 &68.9$\pm$0.8 &54.3$\pm$0.5 &51.4$\pm$0.9\\
        hopper-medium-replay-v2    &91.3$\pm$0.3 &90.8$\pm$0.7 &89.8$\pm$0.2 &61.4$\pm$0.3 &34.8$\pm$0.8\\
        hopper-medium-expert-v2    &110.4$\pm$0.1 &110.3$\pm$0.1 &110.3$\pm$0.3 &87.0$\pm$0.9 &68.2$\pm$1.3\\
        halfcheetah-medium-v2           &45.6$\pm$0.5 &40.2$\pm$0.6 &43.4$\pm$0.3 &43.4$\pm$0.3 &43.1$\pm$0.4\\
        halfcheetah-medium-replay-v2    &46.4$\pm$0.4 &45.0$\pm$0.2 &44.1$\pm$0.4 &43.0$\pm$0.5 &39.0$\pm$0.5\\
        halfcheetah-medium-expert-v2    &8.2$\pm$1.1 &95.2$\pm$0.1 &94.9$\pm$0.2 &95.1$\pm$0.2 &92.8$\pm$0.4\\
        \bottomrule
    \end{tabular}
    \label{tab:alpha_mujoco}
\end{table}

\begin{table}[ht!]
    \centering
    \caption{Ablation study of $\alpha$ in Kitchen tasks. We compute the mean and the standard error over 5 random seeds.}
    \begin{tabular}{cccccc}
    \toprule
        $\alpha$ &0.1 &1.0 &10.0\\
        \midrule
        kitchen-mixed-v0    &52.4$\pm$3.5 &74.6$\pm$0.4 &72.5$\pm$0.7\\
        kitchen-partial-v0  &67.8$\pm$4.1 &88.0$\pm$3.1 &82.1$\pm$3.2\\
        \bottomrule
    \end{tabular}
    \label{tab:alpha_kitchen}
\end{table}

\begin{table}[ht!]
    \centering
    \caption{Ablation study of $\alpha$ in Antmaze tasks. We compute the mean and the standard error over 5 random seeds.}
    \begin{tabular}{cccccc}
        \toprule
        $\alpha$ &0.1 &1.0 &5.0 &10.0 &50.0\\
        \midrule
        antmaze-medium-play-v2 &47.0$\pm$4.2 &87.8$\pm$1.5 &83.8$\pm$1.6 &84.0$\pm$1.3 &80.2$\pm$1.5\\
        antmaze-medium-diverse-v2 &86.0$\pm$1.4 &87.3$\pm$1.7 &80.3$\pm$1.8 &72.1$\pm$2.5 &68.4$\pm$2.8\\
        antmaze-large-play-v2 &34.5$\pm$4.8 &74.6$\pm$2.5 &76.8$\pm$2.0 &71.3$\pm$1.7 &82.4$\pm$2.1 \\
        antmaze-large-diverse-v2 &21.8$\pm$3.2 &70.4$\pm$1.8 &69.7$\pm$1.8 &76.0$\pm$1.9 &68.5$\pm$2.1\\
        \bottomrule
    \end{tabular}
    \label{tab:alpha_antmaze}
\end{table}

\begin{table}[ht!]
    \centering
    \caption{Ablation study of $\alpha$ in Maze2D tasks. We compute the mean and the standard error over 5 random seeds.}
    \begin{tabular}{ccccccc}
        \toprule
        $\alpha$ &0.001 &0.01 &0.1 &1.0 &10.0\\
        \midrule
        maze2d-umaze-v1  &113.3$\pm$2.5 &91.9$\pm$3.7 &126.3$\pm$1.7 &139.2$\pm$1.3 &122.2$\pm$2.1 \\
        maze2d-medium-v1 &155.6$\pm$0.8 &121.1$\pm$4.8 &159.5$\pm$0.8 &148.4$\pm$1.7 &152.1$\pm$1.2 \\
        maze2d-large-v1  &143.4$\pm$5.1 &151.3$\pm$4.7 &195.2$\pm$2.3 &185.0$\pm$5.3 &183.9$\pm$4.8 \\
        \bottomrule
    \end{tabular}
    \label{tab:alpha_maze2d}
\end{table}

\newpage
We also conducted ablation studies on various behavior regularization functions $f$, as shown in Table~\ref{tab:f}.

 \begin{table}[ht!]
    \centering
    \caption{Comparison of normalized scores of Prior Guidance across four behavior regularization functions---KL, Reverse KL, and Pearson $\chi^2$---on the D4RL benchmark. We compute the mean and the standard error over 5 random seeds.}
    \begin{tabular}{cccc}
        \toprule
        $f$ &KL &Reverse KL &Pearson-$\mathcal{X}^2$\\
        \midrule
        walker2d-medium-v2           &82.3$\pm$0.2  &82.0$\pm$0.3  &78.6$\pm$1.4\\
        walker2d-medium-replay-v2    &83.7$\pm$1.0  &83.7$\pm$0.4  &82.4$\pm$0.3\\
        walker2d-medium-expert-v2    &109.4$\pm$0.1 &109.1$\pm$0.0 &103.5$\pm$0.2\\
        hopper-medium-v2           &97.5$\pm$0.6  &96.5$\pm$0.9  &97.0$\pm$0.1\\
        hopper-medium-replay-v2    &91.3$\pm$0.3  &93.5$\pm$0.0  &92.4$\pm$0.0\\
        hopper-medium-expert-v2    &110.4$\pm$0.1 &110.3$\pm$0.0 &111.5$\pm$0.0\\
        halfcheetah-medium-v2           &45.6$\pm$ 0.5 &48.8$\pm$0.5 &43.5$\pm$0.1\\
        halfcheetah-medium-replay-v2    &46.4$\pm$0.4  &46.0$\pm$0.1 &44.9$\pm$0.1\\
        halfcheetah-medium-expert-v2    &95.2$\pm$0.1  &95.4$\pm$0.1 &92.5$\pm$0.1\\
        \midrule
        \textbf{Average} &84.6 &85.0 &82.9\\
        \midrule 
        kitchen-mixed-v0    &74.6$\pm$0.4 &73.3$\pm$0.3 &71.2$\pm$0.3\\
        kitchen-partial-v0  &88.0$\pm$3.1 &85.3$\pm$1.0 &75.6$\pm$3.7\\
        \midrule
        \textbf{Average} &81.3 &79.3 &73.4\\
        \midrule
        antmaze-medium-play-v2 &87.8$\pm$1.5      & 87.6$\pm$2.2 &76.3$\pm$2.5\\
        antmaze-medium-diverse-v2 &87.3$\pm$1.7   & 86.3$\pm$2.0 &80.0$\pm$2.3\\
        antmaze-large-play-v2 &82.4$\pm$2.1      & 79.7$\pm$1.8  &85.2$\pm$1.5\\
        antmaze-large-diverse-v2 &76.0 $\pm$ 1.9 & 77.0$\pm$1.9  &70.9$\pm$2.2\\
        \midrule
        \textbf{Average} &83.4 &82.7 &78.1\\
        \midrule
        maze2d-umaze-v1  &139.2$\pm$1.3 &136.2$\pm$2.1 &132.0$\pm$2.2\\
        maze2d-medium-v1 &159.5$\pm$0.8 &150.4$\pm$1.6 &170.8$\pm$1.0\\
        maze2d-large-v1  &195.2$\pm$2.3 &186.6$\pm$2.3 &193.2$\pm$2.0\\
        \midrule 
        \textbf{Average} &164.6 &157.7 &165.3\\
        \bottomrule
    \end{tabular}
    \label{tab:f}
\end{table}

\newpage
\section{Why PG underperform on standard MuJoCo tasks?}

\begin{table}[h!]
\centering
\caption{Performance comparison across standard MuJoCo tasks.}
\begin{tabular}{ccccc}
\toprule
& DQL & DV* & PG (planner) & PG (policy) \\
\midrule
walker2d-medium-v2           & \textbf{87.0}  & 79.5  & 82.3  & 84.5 $\pm$ 0.26 \\
walker2d-medium-replay-v2    & 95.5  & 83.5  & 83.7  & \textbf{96.4} $\pm$ 2.49 \\
walker2d-medium-expert-v2    & \textbf{110.1} & 109.0 & 109.4 & 109.8 $\pm$ 0.23 \\
hopper-medium-v2             & 90.5  & 84.1  & 97.5  & \textbf{98.4} $\pm$ 1.01 \\
hopper-medium-replay-v2      & \textbf{101.3} & 91.3  & 91.3  & 100.5 $\pm$ 1.27 \\
hopper-medium-expert-v2      & 111.1 & 109.9 & 110.4 & \textbf{111.4} $\pm$ 0.57 \\
halfcheetah-medium-v2        & 51.1  & 50.9  & 45.6  & \textbf{54.7} $\pm$ 0.42 \\
halfcheetah-medium-replay-v2 & 47.8  & 46.4  & 46.4  & \textbf{48.6} $\pm$ 0.34 \\
halfcheetah-medium-expert-v2 & \textbf{96.8}  & 92.3  & 95.2  & 95.5 $\pm$ 0.26 \\
\midrule
Average         & 87.9  & 83.0  & 84.6  & \textbf{88.9} \\
\bottomrule
\label{tab:pg_policy}
\end{tabular}
\end{table}

PG is a diffusion planner-based method designed for long-horizon planning, where capturing temporally extended dependencies is essential. However, MuJoCo locomotion tasks involve short-horizon, dense-reward control focused on making agents run faster, which does not require such lookahead capabilities~\citep{lu2025makes}.  As a result, diffusion planner-based methods, including PG, often underperform compared to diffusion policy-based approaches in these settings, as shown in Table~\ref{tab:d4rl}. While the additional structure and trajectory-level modeling of planners benefit challenging long-horizon domains such as AntMaze, Kitchen, or Maze2D, they provide little advantage and can even be slightly detrimental in simpler environments where policies can already achieve near-optimal performance.

To examine whether the observed performance gap in MuJoCo tasks from the use of the diffusion planner rather than a limitation of the PG framework itself, we conduct an additional experiment by applying PG to a diffusion policy. Specifically, we replaced the standard Gaussian prior of a behavior-cloned diffusion policy with a learnable, behavior-regularized prior, following the same training procedure used in PG.

Table~\ref{tab:pg_policy} shows that replacing the diffusion planner in PG with a diffusion policy (PG~policy) consistently improved performance across MuJoCo tasks, raising the average score from 84.6 (PG~planner) to 88.9 and surpassing the DQL baseline score of 87.9. This demonstrates that PG remains effective even in short-horizon environments, suggesting that the lower performance of PG in MuJoCo tasks is due to the nature of trajectory-level planning rather than a limitation of the method itself. The results for PG~(policy) are averaged over 3 random seeds and reported as the mean $\pm$ standard error.

\end{document}